\def\isReadyToSubmit{0}        \def\includeAuthor{1}     \def\sandy{0}             
\titlespacing{\section}{6pt}{*.3}{*.1}
\titlespacing{\subsection}{4pt}{*.4}{*.2}
\titlespacing{\subsubsection}{2pt}{*.4}{*.2}
\patchcmd{\ttlh@hang}{\parindent\z@}{\parindent\z@\leavevmode}{}{}
\patchcmd{\ttlh@hang}{\noindent}{}{}{}
\theoremstyle{definition}
\newenvironment{ftheorem}{\begin{theorem}}{\end{theorem}}
\newenvironment{fproposition}{\begin{proposition}}{\end{proposition}}
\newenvironment{fproof}{\begin{proof}}{\end{proof}}
\newenvironment{fdefinition}{\begin{definition}}{\end{definition}}
\newcommand{\eg}{{\em e.g.,~}}
\def\F{Fig.~}
\def\T{Tab.~}
\def\L{List.~}
\newcommand{\ar}[3]{} \include{latexdefs}
 \newcommand{\heading}[1]{{\vspace{2pt}\noindent\bf{#1}}}
 \gdef\xxxmark{\expandafter\ifx\csname @mpargs\endcsname\relax \expandafter\ifx\csname @captype\endcsname\relax \marginpar{\textcolor{red}{xxx~}}\else
       \textcolor{red}{xxx~}\fi
   \else
     \textcolor{red}{xxx~}\fi}
 \gdef\xxx{\@ifnextchar[\xxx@lab\xxx@nolab}
 \long\gdef\xxx@lab[#1]#2{{\bf [\xxxmark \textcolor{red}{#2} ---{\sc #1}]}}
 \long\gdef\xxx@nolab#1{{\bf [\xxxmark \textcolor{red}{#1}]}}
   \long\gdef\xxx@lab[#1]#2{}\long\gdef\xxx@nolab#1{}
 \gdef\edit{\@ifnextchar[\edit@lab\edit@nolab}
 \long\gdef\edit@lab[#1]#2{[\textcolor{red}{#2} ---{\sc #1}]}
 \long\gdef\edit@nolab#1{[\textcolor{red}{#1}]}
   \long\gdef\edit@lab[#1]#2{[#2]}
\newcommand{\ignore}[1]{}
\definecolor{codegreen}{rgb}{0,0.6,0}
\lstdefinestyle{codestyle}{
    commentstyle=\color{codegreen},
    keywordstyle=\bfseries,
    basicstyle=\scriptsize\ttfamily,
    breakatwhitespace=false,         
    captionpos=b,                    
    keepspaces=true,                 
    numbers=left,                    
    numbersep=4pt,                  
    showspaces=false,                
    showstringspaces=false,
    showtabs=false,                  
    tabsize=2,
    abovecaptionskip=2pt,
    belowcaptionskip=-20pt,
xleftmargin=2em,
    rulesepcolor=\color{white},
    rulecolor=\color{white},
frame=single}
\newcommand{\hlc}[2][yellow]{{\sethlcolor{#1}\hl{#2}}}
\DeclareMathAlphabet\mathbfcal{OMS}{cmsy}{b}{n}
\newtheorem*{proposition*}{Proposition}
\newtheorem*{theorem*}{Theorem}
\newtheorem*{lemma*}{Lemma}
\newtheorem*{corollary*}{Corollary}
\theoremstyle{definition}
\def\e{\epsilon}  \def\d{\delta}  \def\eg{\epsilon_g}  \def\dg{\delta_g}  \def\egdg{(\eg,\dg)}  \def\ed{(\e,\d)}  \def\D{\mathcal{D}}
\def\M{\mathcal{Q}}
\def\Y{\mathcal{V}}
\def\O{V}
\def\ES{\mathcal{S}}
\def\A{\mathcal{A}}
\def\W{\mathcal{W}}
\def\World{\mathcal{W}}
\def\cL{\mathcal{L}}
\newcommand\Loss{\operatorname{Loss}}
\newcommand\block{\operatorname{blocks}}
\newcommand\cF{\mathcal{F}}
\newcommand\Exp{\mathbb{E}}
\newcommand\fopt{{f^{\star}}}
\newcommand\fdp{{f^{\operatorname{dp}}}}
\newcommand\ntrain{{n_{\operatorname{tr}}}}
\newcommand\ntest{{n_{\operatorname{te}}}}
\newcommand\ntestDPLB{\underline{n}^{\operatorname{dp}}_{\operatorname{te}}}
\newcommand\ntestDP{n^{\operatorname{dp}}_{\operatorname{te}}}
\newcommand\ntrainDP{n^{\operatorname{dp}}_{\operatorname{tr}}}
\newcommand\ntrainDPUB{\overline{n}^{\operatorname{dp}}_{\operatorname{tr}}}
\newcommand\ntrainDPLB{\underline{n}^{\operatorname{dp}}_{\operatorname{tr}}}
\newcommand\LUB{\overline{\cL}}
\newcommand\LLB{\underline{\cL}}
\newcommand\Lap{\operatorname{Laplace}}
\newcommand\BinUB{\overline{\operatorname{Bin}}}
\newcommand\BinLB{\underline{\operatorname{Bin}}}
\newcommand\ACCEPT{\textsc{Accept}\xspace}
\newcommand\REJECT{\textsc{Reject}\xspace}
\newcommand\RETRY{\textsc{Retry}\xspace}
\def\sysname{Sage\xspace}
\def\iterativetraining{privacy-adaptive training\xspace}
\def\IterativeTraining{Privacy-Adaptive Training\xspace}
\def\Iterativetraining{Privacy-adaptive training\xspace}
\def\blockcomposition{block composition\xspace}
\def\Blockcomposition{Block composition\xspace}
\begin{document}

\acmYear{2019}\copyrightyear{2019}
\setcopyright{acmcopyright}
\acmConference[SOSP '19]{SOSP '19: Symposium on Operating Systems Principles}{October 27--30, 2019}{Huntsville, ON, Canada}
\acmBooktitle{SOSP '19: Symposium on Operating Systems Principles, October 27--30, 2019, Huntsville, ON, Canada}
\acmPrice{15.00}
\acmDOI{10.1145/3341301.3359639}
\acmISBN{978-1-4503-6873-5/19/10}

\ifnum\sandy=1
  \doublespacing
\fi
\date{}  

\title{\huge
Privacy Accounting and Quality Control in the\\\sysname Differentially Private ML Platform
}

\ifnum\includeAuthor=1
\author{ {\rm Mathias L\'ecuyer}, {\rm Riley Spahn}, {\rm Kiran Vodrahalli}, {\rm Roxana Geambasu}, and {\rm Daniel Hsu}  \\
         Columbia University
       }
\else
  \author{\vspace{-0.5cm}{\rm Submission \#194}\vspace{-0.5cm}}
\fi

\pagestyle{plain}

\captionsetup{font=footnotesize}

\begin{abstract}

Companies increasingly expose machine learning (ML) models trained over sensitive user data to untrusted domains, such as end-user devices and wide-access model stores.  This creates a need to control the data's leakage through these models.
We present {\em \sysname}, a differentially private (DP) ML platform that bounds the cumulative leakage of training data through models.
\sysname builds upon the rich literature on DP ML algorithms and contributes pragmatic solutions to two of the most pressing systems challenges of global DP:
{\em running out of privacy budget} and {\em the privacy-utility tradeoff}.
To address the former, we develop {\em \blockcomposition}, a new privacy loss accounting method that leverages the growing database regime of ML workloads to keep training models endlessly on a sensitive data stream while enforcing a global DP guarantee for the stream.
To address the latter, we develop {\em \iterativetraining}, a process that trains a model on growing amounts of data and/or with increasing privacy parameters until, with high probability, the model meets developer-configured quality criteria.
\sysname's methods are designed to integrate with TensorFlow-Extended, Google's open-source ML platform.
They illustrate how a systems focus on characteristics of ML workloads enables pragmatic solutions that are not apparent when one focuses on individual algorithms, as most DP ML literature does.

\end{abstract}

\maketitle
\section{Introduction}
\label{sec:introduction}

Machine learning (ML) introduces a dangerous double standard in how companies protect user data.
In traditional applications, sensitive data is siloed and its accesses are carefully controlled.
Consider a messaging application: you do not expect everyone to have access to your messages.
Indeed, there is access control logic programmed within that application that mediates all accesses to the messages and determines who should see which messages.
But in an ML-driven application, the messages are not just used for the main functionality but also to train various models, such as an auto-complete model, a chat bot, or a recommendation model.
These models can leak what you wrote~\cite{carlini2018theSecretSharer} and despite that, they issue predictions to everyone in the world, are often shipped to end-user devices for faster predictions~\cite{baylor2017tfx, hazelwood2018applied, Li:michelangelo,ravi2017onDeviceMachineIntelligence}, and sometimes are shared across teams within the company more liberally than the data~\cite{hazelwood2018applied, Li:michelangelo, twitter-embeddings}.

There is perhaps a sense that because ML models aggregate data from multiple users, they obfuscate individuals' data and therefore warrant weaker protection than the data itself.
However, increasing evidence, both theoretical and empirical, suggests that ML models can indeed leak specifics about individual entries in their training sets.
Language models trained over users' emails for auto-complete have been shown to encode not only commonly used phrases but also social security numbers and credit card numbers that users may include in their private communications~\cite{carlini2018theSecretSharer}.
Collaborative filtering models, as used in recommenders, have been shown to leak specific information across users~\cite{calandrino2011you}.
Membership in a training set was shown to be inferable even when the attacker only has access to a model's external predictions~\cite{shokri2017membership}.
Finally, it has long been established theoretically that access to too many accurate linear statistics from a dataset -- as an adversary might have by observing periodic releases of models, which often incorporate statistics used for featurization -- {\em fundamentally allows reconstruction of the dataset}~\cite{dinurNissim2003revealing}.

As companies continue to disseminate many versions of models into untrusted domains, controlling the risk of data exposure becomes critical.
We present {\em \sysname}, an ML platform based on Google's TensorFlow-Extended (TFX) that uses differential privacy (DP) to bound the cumulative exposure of individual entries in a company's sensitive data streams through all the models released from those streams.
DP randomizes a computation over a dataset (e.g. training one model) to bound the leakage of individual entries in the dataset through the output of the computation (the model).
Each new DP computation increases the bound over data leakage, and can be seen as consuming part of a {\em privacy budget} that should not be exceeded; \sysname makes the process that generates all models and statistics preserve a {\em global DP guarantee}.

Sage builds upon the rich literature on DP ML {\em algorithms} (e.g.,~\cite{abadi2016deep,mcmahan2018aGeneral,mcsherry2009differentially}) and contributes pragmatic solutions to two of the most pressing {\em systems challenges} of global DP: (1) running out of privacy budget and (2) the privacy-utility tradeoff.
\sysname expects to be given training pipelines explicitly programmed to satisfy a parameterized DP guarantee.
It acts as a new access control layer in TFX that mediates data accesses by these pipelines and accounts for the cumulative privacy loss from them to enforce the global DP guarantee against the stream.
At the same time, \sysname provides the developers with: control over the quality of the models produced by the DP training pipelines (addresses challenge (2)); and the ability to release models endlessly without running out of privacy budget for the stream (addresses challenge (1)).

The key to addressing both challenges is the realization that ML workloads operate on {\em growing databases}, a model of interaction that has been explored very little in DP, and only with a purely theoretical and far from practical approach~\cite{cummings2018differential}.
Most DP literature, largely focused on {\em individual algorithms}, assumes either static databases (which do not incorporate new data) or online streaming (where computations do not revisit old data).
In contrast, {\em ML workloads} -- which consist of many algorithms invoked periodically -- operate on {\em growing databases}.
Across invocations of different training algorithms, the workload both incorporates new data and reuses old data, often times adaptively.
It is in that {\em adaptive reuse of old data coupled with new data} that our design of \sysname finds the opportunity to address the preceding two challenges in ways that are practical and integrate well with TFX-like platforms.

To address the running out of privacy budget challenge, we develop {\em \blockcomposition}, the first privacy accounting method that both allows efficient training on growing databases and avoids running out of privacy budget as long as the database grows fast enough.
\Blockcomposition splits the data stream into {\em blocks}, for example by time (e.g., one day's worth of data goes into one block) or by users (e.g., each user's data goes into one block), depending on the unit of protection (event- or user-level privacy).
\Blockcomposition lets training pipelines combine available blocks into larger datasets to train models effectively, but accounts for the privacy loss of releasing a model at the level of the specific blocks used to train that model.
When the privacy loss for a given block reaches a pre-configured ceiling, the block is {\em retired} and will not be used again.
However, new blocks from the stream arrive with zero privacy loss and can be used to train future models.
Thus, as long as the database adds new blocks fast enough relative to the rate at which models arrive, \sysname will never run out of privacy budget for the stream.
Finally, \blockcomposition allows adaptivity in the choice of training computation, privacy parameters, and  blocks to execute on, thereby modeling the most comprehensive form of adaptivity in the DP literature, including ~\cite{rogers2016privacy} which only considers the first two  choices.

To address the privacy-utility tradeoff we develop {\em \iterativetraining}, a training procedure that controls the utility of DP-trained models by repeatedly and adaptively training them on growing data and/or DP parameters available from the stream.
Models retrain until, with high probability, they meet programmer-specified quality criteria (e.g. an accuracy target).
\Iterativetraining uses block composition's support for adaptivity and integrates well with TFX's design, which includes a model validation stage in training pipelines.

\section{Background}
\label{sec:background}

Our effort builds upon an opportunity we observe in today's companies: the rise of {\em ML platforms}, trusted infrastructures that provide key services for ML workloads in production, plus strong library support for their development.
They can be thought of as {\em operating systems} for ML workloads.
Google has TensorFlow-Extended (TFX)~\cite{baylor2017tfx}; Facebook has FBLearner~\cite{hazelwood2018applied}; Uber has Michelangelo~\cite{Li:michelangelo}; and Twitter has DeepBird~\cite{leonardTwitterMeetsTensorFlow}.
The opportunity is to {\em incorporate DP into these platforms as a new type of access control that constrains data leakage through the models a company disseminates}.

\begin{figure}[t]
\centering
	\includegraphics[width=\linewidth]{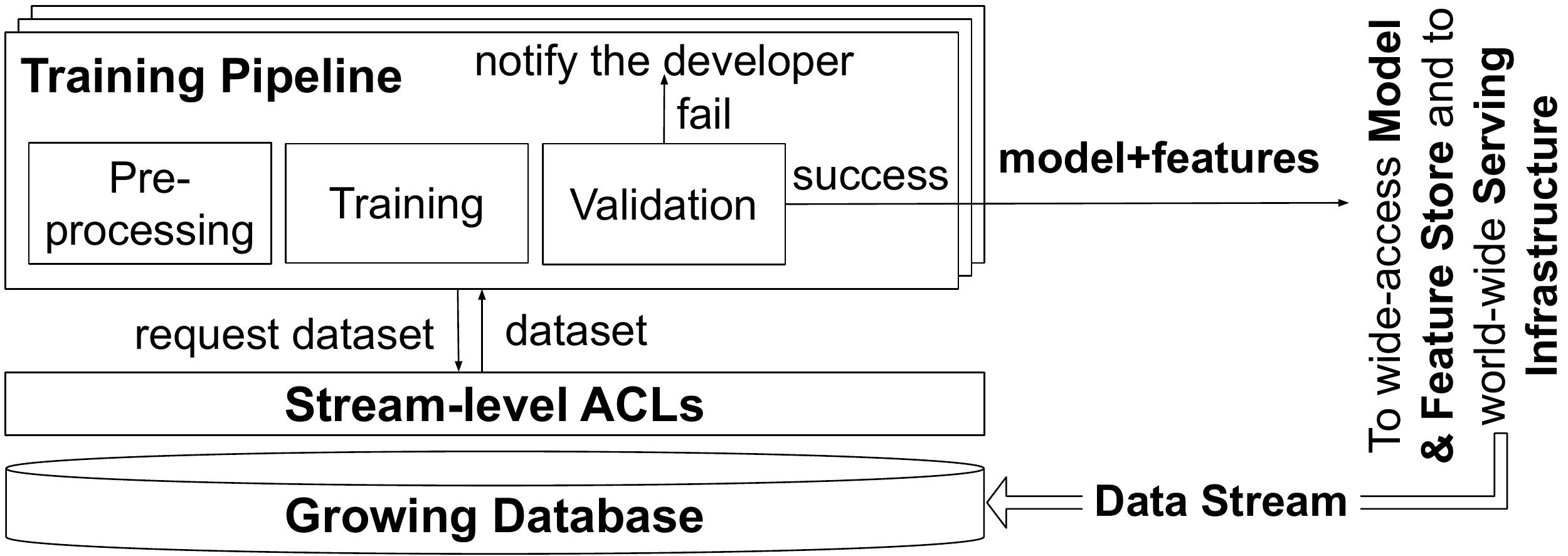}
	\vspace{-0.7cm}
	\caption{{\bf Typical Architecture of an ML Platform.} 
}
	\vspace{-0.5cm}
	\label{fig:tfx-architecture}
\end{figure}

\subsection{ML Platforms}
\label{sec:ml-platforms}

\F\ref{fig:tfx-architecture} shows our view of an ML platform; it is based on~\cite{baylor2017tfx,hazelwood2018applied,Li:michelangelo}.
The platform has several components: {\em Training Pipelines} (one for each model pushed into production), {\em Serving Infrastructure}, and a shared data store, which we call the {\em Growing Database} because it accumulates data from the company's data streams.
The access control policies on the Growing Database are exercised through {\em Stream-level ACLs} and are typically restrictive for sensitive streams.

The {\em Training Pipeline} trains a model on data from the Growing Database and verifies that it meets specific quality criteria before it is deployed for serving or shared with other teams.
It is launched periodically (e.g., daily or weekly) on datasets containing samples from a representative time window (e.g., logs over the past month).  It has three customizable modules: (1) {\em Pre-processing} loads the dataset from the Growing Database, transforms it into a format suitable for training and inference by applying feature transformation operators, and splits the transformed dataset into a {\em training set} and a {\em testing set}; (2) {\em Training} trains the model on a training set; and (3) {\em Validation} evaluates one or more {\em quality metrics} -- such as accuracy for classification or mean squared error (MSE) for regression -- on the testing set.
Validation checks that the metrics reach specific {\em quality targets} to warrant the model's release.
The targets can be fixed by developers or can be values achieved by a previous model.
If the model meets all quality criteria, it is bundled with its feature transformation operators (a.k.a. {\em features}) and pushed into serving.

The {\em Serving Infrastructure} manages the online aspects of the model.  It distributes the model+features to inference servers around the world and to end-user devices and continuously evaluates and partially updates it on new data.
The model+features bundle is also often pushed into a company-wide {\em Model \& Feature Store}, from where other teams within the company can discover it and integrate into their own models.
Twitter and Uber report sharing embedding models~\cite{twitter-embeddings} and tens of thousands of summary statistics~\cite{Li:michelangelo} across teams through their Feature Stores.
To enable such wide sharing, companies sometimes enforce more permissive access control policies on the Model \& Feature Store than on the data itself.

\subsection{Threat Model}
\label{sec:threat-model}

We are concerned with the increase in sensitive data exposure that is caused by applying looser access controls to models+features than are typically applied to data.
This includes placing models+features in company-wide Model \& Feature Stores, where they can be accessed by developers not authorized to access the data.
It includes pushing models+features to end-user devices and prediction servers that could be compromised by hackers or oppressive governments.
And it includes releasing predictions from these models to the world -- either as part of applications or through prediction APIs -- which can be used to infer specifics about training data~\cite{tramer2016stealing,shokri2017membership}.
Our goal is to ``neutralize'' the wider exposure of models+features by making the process of generating them DP across all models+features ever released from a stream.

We assume the following components are trusted and implemented correctly: Growing Database;  Stream-level ACLs; the ML platform code running a Training Pipeline.
We also {\em trust the developer} that instantiates the modules in each pipeline {\em as long as the developer is authorized} by Stream-level ACLs to access the data stream(s) used by the pipeline.
However, we do not trust the wide-access Model \& Feature Store or the locations to which the serving infrastructure disseminates the model+features or their predictions.
Once a model/feature is pushed to those components, it is considered {\em released} to the untrusted domain and accessible to adversaries.

We focus on two classes of attacks against models and statistics (see Dwork~\cite{dwork2017exposed}): (1) {\em membership inference}, in which the adversary infers whether a particular entry is in the training set based on either white-box or black-box access to the model, features, and/or predictions~\cite{backes2016membership,dwork2015robustTraceability,homer2008resolving,shokri2017membership}; and (2) {\em reconstruction attacks}, in which the adversary infers unknown sensitive attributes about entries in the training set based on similar white-box or black-box access~\cite{carlini2018theSecretSharer,dinurNissim2003revealing,dwork2017exposed}.

\subsection{Differential Privacy}
\label{sec:dp-background}

DP is concerned with whether the output of a computation over a dataset -- such as training an ML model -- can reveal information about individual entries in the dataset.  To prevent such information leakage, {\em randomness} is introduced into the computation to hide details of individual entries.
\begin{fdefinition}[Differential Privacy (DP)~\cite{dwork2014algorithmic}]
	\label{def:dp}
	A randomized algorithm $\M : \D \rightarrow \Y$ is $(\epsilon, \delta)$-DP if for any
	$\D, \D'$ with $|D \oplus D'| \leq 1$ and for any $\ES \subseteq \Y$, we have:
$P(\M(\D) \in \ES) \leq e^\epsilon P(\M(\D') \in \ES) + \delta$.
\end{fdefinition}

The $\e>0$ and $\delta \in [0,1]$ parameters quantify the strength of the privacy guarantee: small values imply that one draw from such an algorithm's output gives little information about whether it ran on $D$ or $D'$.
The {\em privacy budget} $\e$ upper bounds an $\ed$-DP computation's privacy loss with probability (1-$\delta$).
$\oplus$ is a dataset distance (e.g. the symmetric difference~\cite{Mcsherry:pinq}). If $|D \oplus D'| \leq 1$, $D$ and $D'$ are {\em neighboring datasets}.

Multiple mechanisms exist to make a computation DP.
They add noise to the computation scaled by its sensitivity $s$, the maximum change in the computation's output when run on any two neighboring datasets.
Adding noise from a Laplace distribution with mean zero and scale $\frac{s}{\epsilon}$ (denoted $laplace(0, \frac{s}{\epsilon}$)) gives $(\e,0)$-DP.
Adding noise from a Gaussian distribution scaled by $\frac{s}{\epsilon}\sqrt{2\ln(\frac{1.25}{\delta})}$ gives $\ed$-DP.

DP is known to address the attacks in our threat model~\cite{shokri2017membership,
dwork2017exposed,carlini2018theSecretSharer,236254}.
At a high level, membership and reconstruction attacks work by finding data
points that make the observed model more likely: if those points were in the
training set, the likelihood of the observed output increases.  DP prevents
these attacks, as no specific data point can drastically increase
the likelihood of the model outputted by the training procedure.

DP literature is very rich and mature, including in ML.
DP versions exist for almost every popular ML algorithm, including: stochastic gradient descent (SGD)~\cite{abadi2016deep,yu2019differentially};
various regressions~\cite{chaudhuri2008privacy, nikolaenko2013privacy, talwar2015nearly-optimal, zhang2012functional,kifer2012convexERM};
collaborative filtering~\cite{mcsherry2009differentially}; 
language models~\cite{McMahan2018LearningDP};
feature selection~\cite{chaudhuri2013nearly-optimal}; model selection~\cite{smith2013DPModelSelection}; evaluation~\cite{boyd2015differential}; and statistics, e.g. contingency tables~\cite{barak2007privacy}, histograms~\cite{xu2012differentially}.
The privacy module in TensorFlow~v2 implements several SGD-based algorithms~\cite{mcmahan2018aGeneral}.

A key strength of DP is its {\em composition} property, which in its basic form, states that the process of running an $(\e_1,\d_1)$-DP and an $(\e_2,\d_2)$-DP computation on the same dataset is $(\e_1+\e_2,\d_1+\d_2)$-DP.  Composition enables the development of complex DP computations  -- such as DP Training Pipelines -- from piecemeal DP components, such as DP ML algorithms.
Composition also lets one account for the privacy loss resulting from a sequence of DP-computed outputs, such as the release of multiple models+features.

A distinction exists between {\em user-level} and {\em event-level} privacy.
User-level privacy enforces DP on all data points contributed by a user toward a computation.
Event-level privacy enforces DP on individual data points (e.g., individual clicks).
User-level privacy is more meaningful than event-level privacy, but much more challenging to sustain on streams.
Although \sysname's design can in theory be applied to user-level privacy (\S\ref{sec:applications-to-user-level-privacy}), we focus most of the paper on {\em event-level privacy}, which we deem practical enough to be deployed in big companies.
\S\ref{sec:conclusions} discusses the limitations of this semantic.

\section{\sysname Architecture}
\label{sec:architecture}

\begin{figure}[t]
\includegraphics[width=\linewidth]{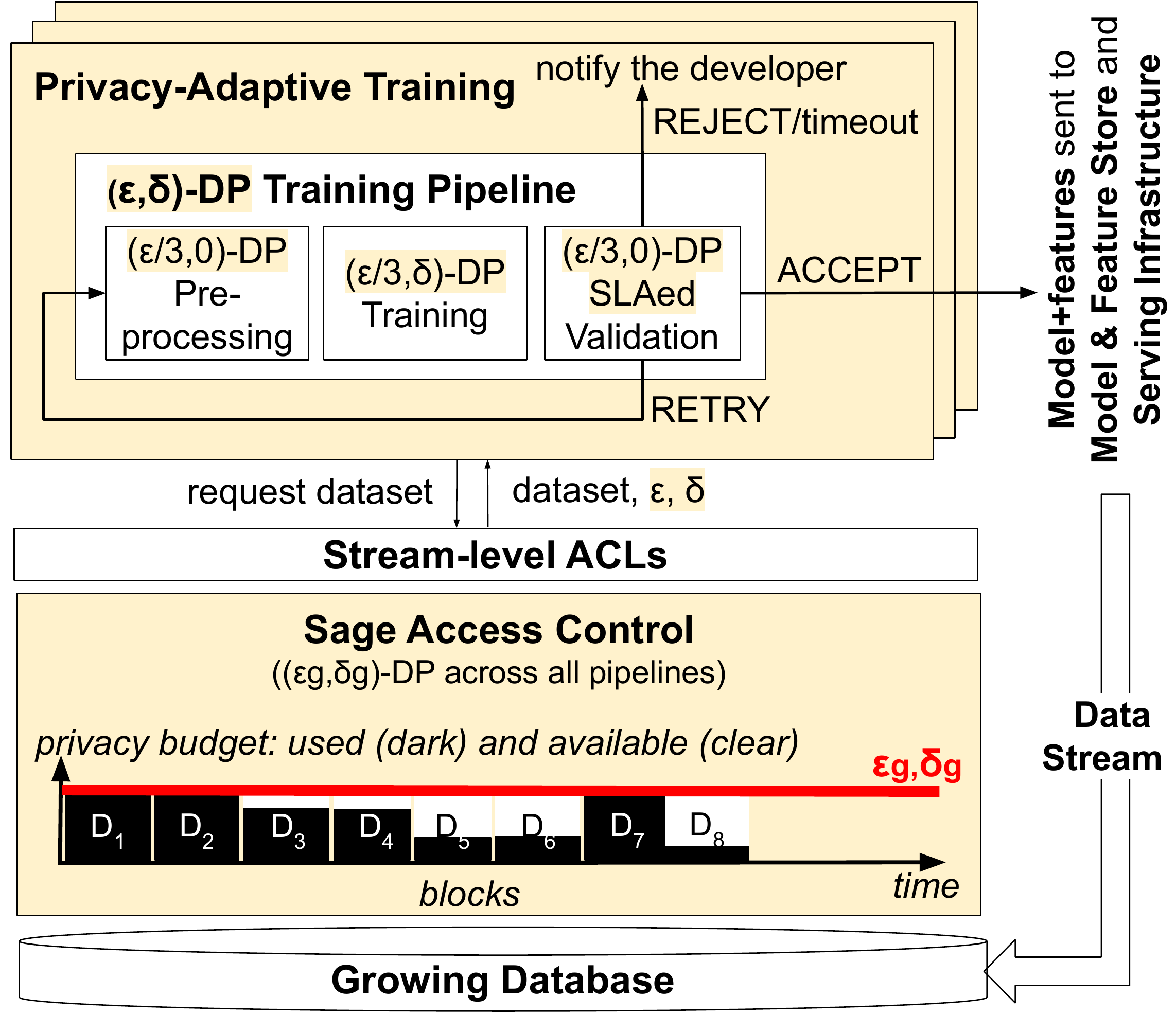}
	\vspace{-0.5cm}
	\caption{{\bf \sysname DP ML Platform.}  Highlights changes from non-DP version.
}
	\label{fig:sage-architecture}
	\vspace{-0.4cm}
\end{figure}

The \sysname training platform enforces a global $\egdg$-DP semantic over all models+features that have been, or will ever be, released from each sensitive data stream.
The highlighted portions in \F\ref{fig:sage-architecture} show the changes \sysname brings to a typical ML platform.
First, each Training Pipeline must be made to individually satisfy $\ed$-DP for some privacy parameters given by \sysname at runtime (box {\em $\ed$-DP Training Pipeline}, \S\ref{sec:dp-training-pipeline}).
The developer is responsible for making this switch to DP, and while research is needed to ease DP programming, this paper leaves that challenge aside.

Second, \sysname introduces an additional layer of access control beyond traditional stream-level ACLs (box {\em \sysname Access Control}, \S\ref{sec:sage-access-control}).
The new layer splits the data stream into {\em blocks} and accounts for the privacy loss of releasing a model+features bundle at the level of the specific blocks that were used to train that bundle.
In theory, blocks can be defined by any insensitive attribute, with two attributes particularly relevant here: time (e.g., one day's worth of data goes into one block) and user ID (e.g., all of a user's data goes into one block).
Defining blocks by time provides event-level privacy; defining them by user ID accommodates user-level privacy.
Because of our focus on the former semantic, this section assumes that blocks are defined by time; \S\ref{sec:applications-to-user-level-privacy} discusses sharding by user ID and other attributes.

When the privacy loss for a block reaches the $\egdg$ ceiling, the block is {\em retired} (blocks $D_1,D_2,D_7$ are retired in \F\ref{fig:sage-architecture}).
However, new blocks arrive with a clean budget and can be used to train future models.
Thus, {\em as long as the database grows fast enough in new blocks}, the system will never run out of privacy budget for the stream.
Perhaps surprisingly, this privacy loss accounting method, which we call {\em \blockcomposition}, is the first practical approach to avoid running out of privacy budget while enabling effective training of ML models on growing databases.
\S\ref{sec:sage-access-control} gives the intuition of \blockcomposition; \S\ref{sec:block-composition} formalizes it and proves it $\egdg$-DP.

Third, \sysname provides developers with control over the quality of models produced by the DP Training Pipelines.
Such pipelines can produce less accurate models that fail to meet their quality targets more often than without DP.  DP pipelines can also push in production low-quality models whose validations succeed by mere chance.
Both situations lead to operational headaches: the former gives more frequent notifications of failed training, the latter gives dissatisfied users.
The issue is often referred to as the {\em privacy-utility tradeoff} of running under a DP regime.
\sysname addresses this challenge by wrapping the $\ed$-DP Training Pipeline into an adaptive process that invokes training pipelines repeatedly on increasing amounts of data and/or privacy budgets to reduce the effects of DP randomness until with high probability models reach their quality criteria (box {\em \IterativeTraining}, \S\ref{sec:sage-iterative-training}).

\subsection{Example $\ed$-DP Training Pipeline}
\label{sec:dp-training-pipeline}

\begin{lstlisting}[float,floatplacement=tbp,
    label={list:example_pipeline},
    language=Python,
    caption={{\bf Example Training Pipeline.} Non-DP TFX (stricken through) and DP \sysname (highlighted) versions. TFX API simplified for exposition.}]
def preprocessing_fn(inputs, epsilon):
  dist_01 = tft.scale_to_0_1(inputs["distance"],`\hl{0,100})`
  speed_01 = tft.scale_to_0_1(inputs["speed"],`\hl{0,100}`)
  hour_of_day_speed = `\sout{group\_by\_mean}`
  `\hl{sage.dp\_group\_by\_mean}`(
    inputs["hour_of_day"], speed_01, 24, `\hl{epsilon, 1.0}`)
  return {"dist_scaled": dist_01,
    "hour_of_day": inputs["hour_of_day"],
    "hour_of_day_speed": hour_of_day_speed,
    "duration": inputs["duration"]}

def trainer_fn(hparams, schema, `\hl{epsilon, delta})`: [...]
  feature_columns = [numeric_column("dist_scaled"),
    numeric_column("hour_of_day_speed"),
    categorical_column("hour_of_day", num_buckets=24)]
  estimator = \
    `\sout{tf.estimator.DNNRegressor}``\hl{sage.DPDNNRegressor}`(
      config=run_config,
      feature_columns=feature_columns,
      dnn_hidden_units=hparams.hidden_units,
      `\hl{privacy\_budget=(epsilon, delta)}`)
  return tfx.executors.TrainingSpec(estimator,...)

def validator_fn(`\hl{epsilon})`:
  model_validator = \
    `\sout{tfx.components.ModelValidator}``\hl{sage.DPModelValidator}`(
      examples=examples_gen.outputs.output,
      model=trainer.outputs.output,
      metric_fn = _MSE_FN, target = _MSE_TARGET,
      `\hl{epsilon=epsilon, confidence=0.95, B=1}`)
  return model_validator

def dp_group_by_mean(key_tensor, value_tensor, nkeys,
  `\hl{epsilon, value\_range}`):
  key_tensor = tf.dtypes.cast(key_tensor, tf.int64)
  ones = tf.fill(tf.shape(key_tensor), 1.0)
  dp_counts = group_by_sum(key_tensor, ones, nkeys)\
    `\hl{+ laplace(0.0, 2/epsilon, nkeys)}`
  dp_sums = group_by_sum(
    key_tensor,value_tensor,nkeys)\
    `\hl{+ laplace(0.0, value\_range * 2/epsilon, nkeys)}`
  return tf.gather(dp_sums/dp_counts, key_tensor)
\end{lstlisting}

\sysname expects each pipeline submitted by the ML developer to satisfy a parameterized $\ed$-DP.
Acknowledging that DP programming abstractions warrant further research,
\L~\ref{list:example_pipeline} illustrates the changes a developer would have to make at present to convert a non-DP training pipeline written for TFX to a DP training pipeline suitable for \sysname.
Removed/replaced code is stricken through and the added code is highlighted.
The pipeline processes New York City Yellow Cab data~\cite{yellowCabData} and trains a model to predict the duration of a ride.  

To integrate with TFX (non-DP version), the developer implements three TFX callbacks.
(1)~\texttt{preprocessing\_fn} uses the dataset to compute aggregate features and make user-specified feature transformations.  The example model has three features: the distance of the ride; the hour of the day; and an aggregate feature representing the average speed of cab rides each hour of the day.
(2)~\texttt{trainer\_fn} specifies the model: it configures the columns to be modeled, defines hyperparameters, and specifies the dataset.
The example model trains with a neural network regressor.
(3)~\texttt{validator\_fn} validates the model by comparing test set MSE to a constant.

To integrate with \sysname (DP version), the developer: (a) switches library calls to DP versions of the functions (which ideally would be available in the DP ML platform) and (b) splits the $\ed$ parameters, which are assigned by \sysname at runtime, across the DP function calls.
(1) \texttt{preprocessing\_fn} replaces one call with a DP version from \sysname: the mean speed per day uses \sysname's \texttt{dp\_group\_by\_mean}.
This function (lines 33-42) computes the number of times each key appears and the sum of the values associated with each key.
It makes both DP by adding draws from appropriately-scaled Laplace distributions to each count. Each data point has exactly one key value so the privacy budget usage composes in parallel across keys~\cite{Mcsherry:pinq}.
The privacy budget is split across the sum and count queries.
We envision common functions like this being available in the DP ML platform.
(2) \texttt{trainer\_fn} switches the call to the non-private regressor with the DP implementation, which in \sysname is a simple wrapper around TensorFlow's DP SGD-based optimizer.
(3) \texttt{validator\_fn} invokes \sysname's DP model validator (\S\ref{sec:sage-iterative-training}).

\subsection{\sysname Access Control}
\label{sec:sage-access-control}
\begin{figure}[t]
	\centering
\includegraphics[width=.9\linewidth]{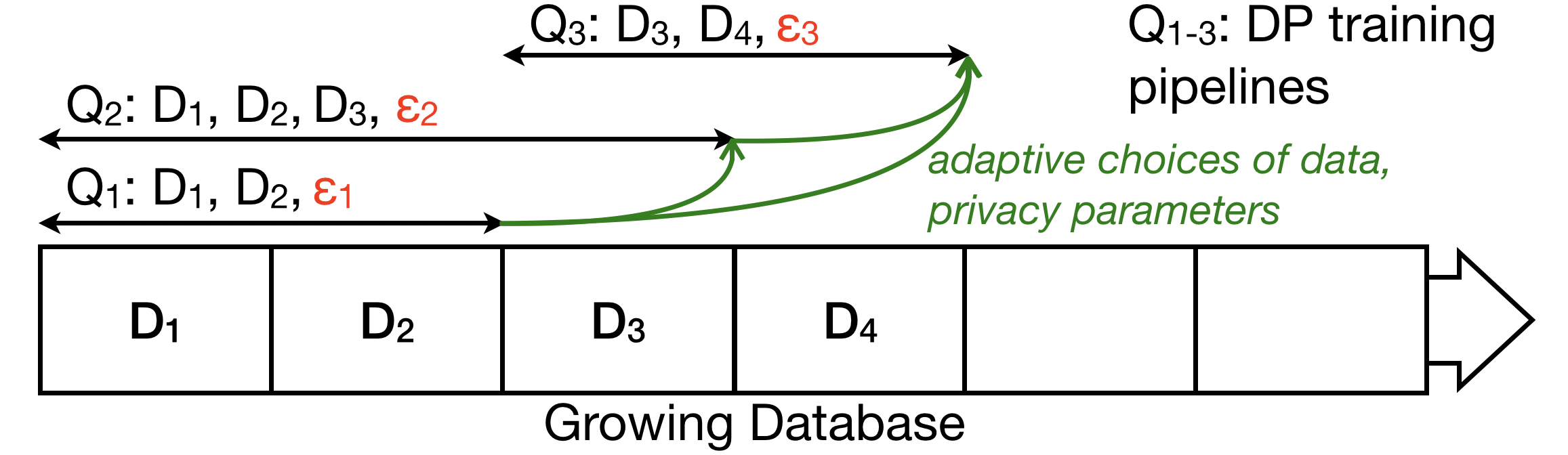}
	\vspace{-0.4cm}
	\caption{{\bf Characteristics of Data Interaction in ML.}
	}
	\label{fig:block-queries}
	\vspace{-0.4cm}
\end{figure}

\sysname uses the composition property of DP to rigorously account for (and bound) the cumulative leakage of data from sensitive data streams across multiple releases of models+features learned from these streams.
Specifically, for each sensitive stream, \sysname maintains a pre-specified, event-level $\egdg$-DP guarantee across all uses of the stream.
Unfortunately, traditional DP composition theory considers either: (a) a static database regime, whose adaptation to growing databases leads to wasteful privacy loss accounting; or (b) purely online streaming, which is inefficient for many ML workloads, including deep neural network training.
We thus developed our own composition theory, called {\em block composition}, which leverages characteristics of ML workloads running on growing databases to permit both efficient privacy loss accounting and efficient learning.
\S\ref{sec:block-composition} formalizes the new theory.
This section describes the limitations of existing DP composition for ML and gives the intuition for block composition and how \sysname uses it as a new form of access control in ML platforms.

\heading{Characteristics of Data Interaction in ML.}   \F\ref{fig:block-queries} shows an example of a typical workload as seen by an ML platform.
Each training pipeline, or {\em query} in DP parlance, is denoted $Q_i$.
We note three characteristics.
First, a typical ML workload consists of multiple training pipelines, training over time on overlapping data subsets of various sizes from an ever-growing database.
For instance, $Q_2$ may train a large deep neural network requiring massive amounts of data to reach good performance, while $Q_3$ may train a linear model with smaller data requirements, or even a simple statistic like the mean of a feature over the past day.
The pipelines are typically updated or retrained as new data is collected, with old data eventually deemed irrelevant and ignored.

Second, the data given to a training pipeline -- and for a DP model, its DP parameters -- are typically {\em chosen adaptively}.
Suppose the model trained in $Q_1$ on data from $D_{1,2}$ with budget $\epsilon_1$ gives unsatisfactory performance.
After a new block $D_3$ is collected, a developer may decide to retrain the same model in query $Q_2$ on data from $D_{1,2,3}$, and with a higher DP budget $\epsilon_2$.
Adaptivity can also happen indirectly through the data.
Suppose $Q_2$ successfully trained a recommendation model. Then, future data collected from the users (e.g., in $D_4$) may depend on the recommendations. Any subsequent query, such as $Q_3$, is potentially influenced by $Q_2$'s output.

Third, ML workloads are dynamic and continuous.
New models are introduced and expected to be trained within a reasonable amount of time; some models are run periodically with new data, others are removed from production; and this dynamic workload continues as long as new data is collected.

These characteristics imply three requirements for a composition theory suitable for ML. It must support:
\begin{enumerate}
	\item[{\bf R1}] Queries on overlapping data subsets of diverse sizes.
	\item[{\bf R2}] Adaptivity in the choice of: queries, DP parameters, and data subsets the queries process.
	\item[{\bf R3}] Endless execution on a growing database.
\end{enumerate}

\heading{Limitations of Existing Composition Theory for ML.}
No previous DP composition theory supports all three requirements.
DP has mostly been studied for static databases, where (adaptively chosen) queries are assumed to compute over {\em the entire database}.  Consequently, composition accounting is typically made at {\em query level}: each query consumes part of the total available privacy budget for the database.
Query-level accounting has carried over even in extensions to DP theory that handle streaming databases~\cite{dwork2010differential} and partitioned queries~\cite{Mcsherry:pinq}.
There are multiple ways to apply query-level accounting to ML, but each trades off at least one requirement.

First, one can query overlapping data subsets ({\bf R1}) and allow adaptivity across these queries ({\bf R2}) by accounting for composition at the query level {\em against the entire stream}.  In \F\ref{fig:block-queries}, after executing queries $Q_{1-3}$, the total privacy loss for the stream would be $\epsilon_1 + \epsilon_2 + \epsilon_3$.
This approach wastes privacy budget and leads to the problem of ``running out of budget.''
Once $\eg = \epsilon_1 + \epsilon_2 + \epsilon_3$, enforcing a global leakage bound of $\eg$ means that {\em one must stop using the stream} after query $Q_3$.
This is true even though (1) not all queries run on all the data and (2) there will be new data in the future (e.g., $D_5$).
This violates requirement ({\bf R3}) of endless execution on streams.

Second, one can restructure the queries to enable finer granularity with query-level accounting.
The data stream is partitioned in blocks, as in \F\ref{fig:block-queries}.
Each query is split into multiple sub-queries, each running DP on an individual block.
The DP results are then aggregated, for instance by averaging model updates as in federated learning~\cite{McMahan2018LearningDP}.
Since each block is a separate dataset, traditional composition can account for privacy loss at the block level.
This approach supports adaptivity ({\bf R2}) and endless execution on streams ({\bf R3}) as new data blocks incur no privacy loss from past queries.
However, it violates requirement ({\bf R1}), resulting in unnecessarily noisy learning~\cite{duchi2019lower,duchi2018minimax}.
Consider computing a feature average. DP requires adding noise once, after summing all values on the combined blocks.
But with independent queries over each block, we must add the same amount of noise to the sum over each block, yielding a more noisy total.
As another example, several DP training algorithms~\cite{abadi2016deep, McMahan2018LearningDP} fundamentally rely on sampling small training batches from large datasets to amplify privacy, which cannot be done without combining blocks.

Third, one can consume the data stream online using streaming DP.
A new data point is allocated to one of the waiting queries, which consumes its entire privacy budget.
Because each point is used by one query and discarded, DP holds over the entire stream.
New data can be adaptively assigned to any query ({\bf R2}) and arrives with a fresh budget ({\bf R3}).
However, queries cannot use past data or overlapping subsets, violating {\bf R1} and rendering the approach impractical for large models.

\heading{Block Composition.}
Our new composition theory meets all three requirements.
It splits the data stream into disjoint {\em blocks} (e.g., one day's worth of data for event-level privacy), forming a growing database on which queries can run on overlapping and adaptively chosen sets of blocks ({\bf R1}, {\bf R2}).
This lets pipelines combine blocks with available privacy budgets to assemble large datasets.
Despite running on overlapping data sets, our theoretical analysis (\S\ref{sec:block-composition}) shows that we can still account for the privacy loss at the level of individual blocks, namely that each query only impacts the blocks it actually uses, {\em not the entire data stream}.
In \F\ref{fig:block-queries}, the first three blocks each incur a privacy loss of $\epsilon_1 + \epsilon_2$ while the last block has $\epsilon_2 + \epsilon_3$.
The privacy loss of these three queries over the entire data stream will only be the maximum of these two values.
Moreover, when the database grows (e.g. block $D_5$ arrives), the new blocks' privacy loss is zero. The system can thus run endlessly by training new models on new data ({\bf R3}).

\heading{\sysname Access Control.}
With block composition, \sysname controls data leakage from a stream by enforcing DP on its blocks.
The company configures a desirable $\egdg$ global policy for each sensitive stream.
The \sysname Access Control component tracks the available privacy budget for each data block.
It allows access to a block until it runs out of budget, after which ML access to the block will forever be denied.
When the \sysname Iterator (described in \S\ref{sec:sage-iterative-training}) for a pipeline requests data, \sysname Access Control only offers blocks with available privacy budget.
The Iterator then determines the $\ed$ privacy parameters it would like to try for its next iteration and requests that budget from \sysname Access Control, which deducts $\ed$ from the available privacy budgets of those blocks  (assuming they are still available).
Finally, the Iterator invokes the developer-supplied DP Training Pipeline, trusting it to enforce the chosen $\ed$ privacy parameters.
\S\ref{sec:block-composition} proves that this access control policy enforces $\egdg$-DP for the stream.

The preceding operation is a DP-informed retention policy, but one can use block composition to define other access control policies.
Suppose the company is willing to assume that its developers (or user devices and prediction servers in distinct geographies) will not collude to violate its customers' privacy.
Then the company could enforce a separate $\egdg$ guarantee for each context (developer or geography) by
maintaining separate lists of per-block available budgets.

\subsection{\IterativeTraining}
\label{sec:sage-iterative-training}

\sysname gives developers control over the quality of the models it pushes into production, which can be affected by DP randomness.
We describe two techniques: (1) {\em SLAed validation} accounts for the effect of randomness in the validation process to ensure a high-probability guarantee of correct assessment (akin to a quality service level agreement, or SLA); and (2) {\em \iterativetraining} launches the $\ed$-DP Training Pipeline on increasing amounts of data from the stream, and/or with increased privacy parameters, to improve the model's quality adaptively until validation succeeds.
\Iterativetraining thus leverages adaptivity support in block composition to address DP's privacy-utility tradeoff.

\begin{lstlisting}[float,floatplacement=tbp,
label={list:dp_validator},
language=Python,
caption={\footnotesize \bf Implementation of \texttt{sage.DPLossValidator}.}]
class DPLossValidator(sage.DPModelValidator):
  def validate(loss_fn, target, epsilon, conf, B):
    if _ACCEPT_test(..., epsilon, (1-conf)/2, B):
      return ACCEPT
    if _REJECT_test(..., epsilon, (1-conf)/2, B):
      return REJECT
    return RETRY

  def _ACCEPT_test(test_labels, dp_test_predictions,
        loss_fn, target, epsilon, eta, B):
    n_test = dp_test_predictions.size()
    n_test_dp = n_test + laplace(2/epsilon)
    n_test_dp_min = n_test_dp -\
      2*log(3/(2*eta))/epsilon
    dp_test_loss = clip_by_value(loss_fn(test_labels,
      dp_test_predictions), 0, B)+laplace(2*B/epsilon)
    corrected_dp_test_loss = dp_test_loss +
      2*B*log(3/(2*eta))/epsilon
    return bernstein_upper_bound(
      corrected_dp_test_loss / n_test_dp_min,
      n_test_dp_min, eta/3, B) <= target

  def bernstein_upper_bound(loss, n, eta, B):
    return loss+sqrt(2*B*loss*log(1/eta)/n)+\
      4*log(1/eta)/n
\end{lstlisting}

\heading{SLAed DP Validation.}
\F\ref{fig:sage-architecture} shows the three possible outcomes of SLAed validation: {\ACCEPT}, {\REJECT}/timeout, and {\RETRY}.
If SLAed validation returns {\ACCEPT}, then with high probability (e.g. 95\%) the model reached its configured quality targets for prediction on new data from the same distribution.
In some cases, it is also possible to give statistical guarantees that the model will never reach a target irrespective of sample size and privacy parameters, in which case SLAed validation returns {\REJECT}.
\sysname also supports timing out a training procedure if it has run for too long. 
Finally, if SLAed validation returns {\RETRY}, it signals that more data is needed for an assessment.
Here we focus on the {\ACCEPT} and {\RETRY} outcomes and refer the reader to Appendix~\ref{appendix:sla-tests} for a discussion of {\REJECT} tests.

We have implemented SLAed validators for three classes of metrics: loss metrics (e.g. MSE, log loss), accuracy, and absolute errors of sum-based statistics such as mean and variance.
All validators follow the same logic.
First, we compute a DP version of the test quantity (e.g. MSE) on a testing set.
Second, we compute the {\em worst-case impact of DP noise} on that quantity for a given confidence probability; we call this a {\em correction for DP impact}.
For example, if we add Laplace noise with parameter $\frac{1}{\epsilon}$ to the sum of squared errors on $n$ data points, assuming that the loss is in $[0,1]$ we know that with probability $(1-\eta)$ the sum is deflated by less than $-\frac{1}{\epsilon}\ln(\frac{1}{2\eta})$, because a draw from this Laplace distribution has just an $\eta$ probability to be more negative than this value.
Third, we use known statistical concentration inequalities, also made DP and corrected for worst case noise impact, to upper bound with high probability the loss on the entire distribution.
We next detail the loss SLAed validator; \S\ref{appendix:sla-tests} describes the others.

\heading{Example: Loss SLAed Validator.}
A loss function is a measure of erroneous predictions on a dataset (so lower is better).  Examples: mean squared error for regression, log loss for classification, and minus log likelihood for Bayesian generative models.  \L\ref{list:dp_validator} shows our loss validator and details the implementation of its {\ACCEPT} test.

Denote: the DP-trained model $\fdp$; the loss function  range $[0,B]$; the target loss $\tau_{loss}$.
Lines 11-14 compute a DP estimate of the number of samples in the test set, corrected for the impact of DP noise to be a lower bound on the true value with probability $(1-\frac{\eta}{3})$.
Lines 15-18 compute a DP estimate of the loss sum, corrected for DP impact to be an upper bound on the true value with probability $(1-\frac{\eta}{3})$.
Lines 19-21 {\ACCEPT} the model if the upper bound is at most $\tau_{loss}$.
The bounds are based on a standard concentration inequality (Bernstein's inequality, Lines 24-25), which holds under very general conditions~\cite{Shalev-Shwartz:2014:UML:2621980}.
We show in \S\ref{appendix:sla-tests:loss} that the Loss {\ACCEPT} Test satisfies $(\e,0)$-DP and enjoys the following guarantee:
\begin{fproposition}[Loss {\ACCEPT} Test]\label{prop:loss-accept}
	With probability at least $(1-\eta)$, the {\ACCEPT} test returns true only if the expected loss of $\fdp$ is at most $\tau_{loss}$.
\end{fproposition}

\heading{\IterativeTraining.}
\sysname attempts to improve the quality of the model and its validation by supplying them with more data or privacy budgets so the SLAed validator can either {\ACCEPT} or {\REJECT} the model.
Several ways exist to improve a DP model's quality.
First, we can increase the dataset's size: at least in theory, it has been proven that one can compensate for the loss in accuracy due to {\em any} $\ed$-DP guarantee by increasing the training set size~\cite{kasiviswanathan2011can}.
Second, we can increase the privacy budget $\ed$ to decrease the noise added to the computation: this must be done within the available budgets of the blocks involved in the training and {\em not too aggressively}, because wasting privacy budget on one pipeline can prevent other pipelines from using those blocks.

\Iterativetraining searches for a configuration that can be either {\ACCEPT}ed or {\REJECT}ed by the SLAed validator.
We have investigated several strategies for this search.  Those that conserve privacy budget have proven the most efficient.
Every time a new block is created, its budget is divided evenly across the ML pipelines currently waiting in the system.
Allocated DP budget is reserved for the pipeline that received it, but \iterativetraining will not use all of it right away.
It will try to {\ACCEPT} using as little of the budget as possible.
When a pipeline is {\ACCEPT}ed, its remaining budget is reallocated evenly across the models still waiting in \sysname.

To conserve privacy budget, each pipeline will first train and test using a
small configurable budget ($\e_0,\d_0$), and a minimum window size for
the model's training.
On {\RETRY} from the validator, the pipeline will be retrained, making sure to
double either the privacy budget if enough allocation is available to the
Training Pipeline, or the number of samples available to the Training Pipeline
by accepting new data from the stream.
This doubling of resources ensures that when a model is {\ACCEPT}ed, the sum of
budgets used by all failed iterations is at most equal to the budget used by the final, accepted iteration. 
This final budget also overshoots the best possible budget by at most
two, since the model with half this final budget had a {\RETRY}.
Overall, the resources used by this DP budget search are thus at most four times the budget of the final model.
Evaluation \S\ref{sec:evaluation:end-to-end-experiment} shows that this conservative
strategy improves performance when multiple Training Pipelines contend for the
same blocks.

\section{Block Composition Theory}
\label{sec:block-composition}

\begin{figure}[t!]
  \footnotesize
  \centering
  \vspace{-0.2cm}
  \begin{subfigure}[t]{\linewidth}
    \begin{algorithm}[H]
      \footnotesize
      \caption*{(a) QueryCompose($\A$, $b$, $r$, $(\epsilon_i,\delta_i)_{i=1}^r$):}
      \begin{algorithmic}[1]
      \For{$i$ in $1$, $\dots$, $r$}
        \Comment{($\A$ depends on $\O^b_1, \dots, \O^b_{i-1}$ in iter.\ $i$)}
        \State $\A$ gives neighboring datasets $\D^{i,0}$ \& $\D^{i,1}$
        \State $\A$ gives $(\epsilon_i, \delta_i)$-DP $\M_i$
        \State $\A$ receives $\O^b_i = \M_i(\D^{i,b})$
      \EndFor
      \Return{$V^b = (\O^b_1, \dots, \O^b_r)$}
    \end{algorithmic}
    \end{algorithm}
    \vspace{-0.5cm}
    \caption{\footnotesize {\bf Traditional Query-level Accounting.}
}
  \label{fig:compose}
  \end{subfigure}

  \vspace{-0.3cm}

  \begin{subfigure}[t]{\linewidth}
    \begin{algorithm}[H]
      \footnotesize
      \caption*{(b) BlockCompose($\A$, $b$, $r$, $(\epsilon_i,\delta_i)_{i=1}^r$, $(\block_i)_{i=1}^r$):}
      \begin{algorithmic}[1]
      \State \hl{$\A$ gives two neighboring block datasets $\D^{0}$ and $\D^{1}$}
      \For{$i$ in $1$, $\dots$, $r$}
        \Comment{($\A$ depends on $\O^b_1, \dots, \O^b_{i-1}$ in iter.\ $i$)}
\State $\A$ gives $(\epsilon_i, \delta_i)$-DP $\M_i$
        \State $\A$ receives $\O^b_i = \M_i($\hl{$\bigcup\limits_{j \in \block_i} \D_{j}^{b}$}$)$
      \EndFor
      \Return{$V^b = (\O^b_1, \dots, \O^b_r)$}
    \end{algorithmic}
    \end{algorithm}
    \vspace{-0.5cm}
    \caption{\footnotesize {\bf Block Composition for Static Datasets.}
    	Change from query-level accounting shown in yellow background.
}
    \label{fig:block-compose}
  \end{subfigure}

  \vspace{-0.3cm}

  \begin{subfigure}[t]{\linewidth}
  	\begin{algorithm}[H]
  		\footnotesize
  		\caption*{(c) AdaptiveStreamBlockCompose($\A$, $b$, $r$, \hlc[green]{$\epsilon_g$, $\delta_g$, $\World$}):}
  		\begin{algorithmic}[1]
  			\State \hl{$\A$ gives $k$, the index of the block with the adversarially chosen change}
\For{$i$ in $1$, $\dots$, $r$}
        \Comment{($\A$ depends on $\O^b_1, \dots, \O^b_{i-1}$ in iter.\ $i$)}
\If{\hl{create new block $l$ and $l == k$}}
\State \hl{$\A$ gives neighboring blocks $\D_k^{0}$ and $\D_k^{1}$}
        \ElsIf{\hl{create new block $l$ and $l \neq k$}}
  			\State \hl{$\D_{l}^{b} = \D(\World, \O^b_1, \dots, \O^b_{i-1})$}
\EndIf
  			\State \hlc[green]{$\A$ gives $\block_i$, $(\epsilon_i, \delta_i)$, and $(\epsilon_i, \delta_i)$-DP $\M_i$}
  			\If{\hlc[green]{$\bigwedge\limits_{j \in \block_i} \text{AccessControl}^j_{\epsilon_g,\delta_g}(\epsilon_1^j, \delta_1^j, ..., \epsilon_i^j, \delta_i^j, 0, ...)$}}
  			   \State $\A$ receives $\O^b_i = \M_i(\bigcup\limits_{j \in \block_i} \D_{j}^{b})$
         \Else{$\A$ receives no-op $\O^b_i = \perp$}
\EndIf
  			\EndFor
  			\Return{$V^b = (\O^b_1, \dots, \O^b_r)$}
  		\end{algorithmic}
  	\end{algorithm}
  	\vspace{-0.5cm}
  	\caption{\footnotesize{\bf Sage Block Composition.}
  		Adds support for streams (yellow lines 1-6) and adaptive choice of blocks, privacy parameters (green lines 7-8).
}
  	\label{fig:sage-block-compose}
  \end{subfigure}

  \vspace{-0.3cm}

  \caption{{\bf Interaction Protocols for Composition Analysis.}
$\A$ is an algorithm defining the adversary's power; $b \in \{0,1\}$ denotes two hypotheses the adversary aims to distinguish; $r$ is the number of rounds; $(\epsilon_i,\delta_i)_{i=1}^r$ the DP parameters used at each round; $(\block_i)_{i=1}^r$ the blocks used at each round.
$\text{AccessControl}^j_{\epsilon_g,\delta_g}$ returns true if running $(\epsilon_i,\delta_i)$-DP query $\M_i$ on block $j$ ensures that with probability $\ge (1-\delta_g)$ the privacy loss for block $j$ is $\le \epsilon_g$. 
}
  \vspace{-0.3cm}
  \label{f:composition-algorithms}
\end{figure}

This section provides the theoretical backing for block composition, which we invent for \sysname but which we believe has broader applications (\S\ref{sec:applications-to-user-level-privacy}).
To analyze composition, one formalizes permissible interactions with the sensitive data in a {\em protocol} that facilitates the proof of the DP guarantee.
This interaction protocol makes explicit the worst-case decisions that can be made by modeling them through an adversary.
In the standard protocol (detailed shortly), an adversary $\A$ picks the neighboring data sets and supplies the DP queries that will compute over one of these data sets; the choice between the two data sets is exogenous to the interaction.
To prove that the interaction satisfies DP, one must show that given the results of the protocol, it is impossible to determine with high confidence which of the two neighboring data sets was used.

\F\ref{f:composition-algorithms} describes three different interaction protocols of increasing sophistication.
Alg.~(\ref{fig:compose}) is the basic DP composition protocol.
Alg.~(\ref{fig:block-compose}) is a block-level protocol we propose for static databases.
Alg.~(\ref{fig:sage-block-compose}) is the protocol adopted in \sysname; it extends Alg.~(\ref{fig:block-compose}) by allowing a streaming database and adaptive choices of blocks and privacy parameters.
Highlighted are changes made to the preceding protocol.

\subsection{Traditional Query-Level Accounting}
\label{sec:theory:query-level-accounting}
{\em QueryCompose} (Alg.~(\ref{fig:compose})) is the interaction protocol assumed in most analyses of composition of several DP interactions with a database.
There are three important characteristics.
First, the number of queries $r$ and the DP parameters $(\epsilon_i,\delta_i)_{i=1}^r$ are fixed in advance.
However the DP queries $\M_i$ can be chosen adaptively.
Second, the adversary adaptively chooses neighboring datasets $\D^{i,0}$ and
$\D^{i,1}$ for each query.
This flexibility lets the protocol readily support adaptively evolving data (such as with data streams) where future data collected may be impacted by the adversary's change to past data.
Third, the adversary receives the results $V^b$ of running the DP queries
$\M_i$ on $\D^{i,b}$; here, $b \in \{0,1\}$ is the exogenous choice of which database to use and is unknown to $\A$.
DP is guaranteed if $\A$ cannot confidently learn $b$ given $V^b$.

A common tool to analyze DP protocols is {\em privacy loss}:
\begin{fdefinition}[Privacy Loss]
\label{def:privacy-loss}
Fix any outcome $v = (v_1, \dotsc, v_r)$ and denote $v_{<i} = (v_1, \dotsc, v_{i-1})$. The {\em privacy loss} of
an algorithm Compose($\A$, $b$, $r$, $\cdot$) is:
\[
  \Loss(v) = \ln\Big( \frac{P(V^0 = v)}{P(V^1 = v)} \Big) = \ln\Big( \prod\limits_{i=1}^r \frac{P(\O^0_i = v_i | v_{<i})}{P(\O^1_i = v_i | v_{<i})} \Big)
\]
\end{fdefinition}
Bounding the privacy loss for any adversary $\A$
with high probability implies DP~\cite{kasiviswanathan2014semantics}.
Suppose that for any $\A$, with probability $\ge (1 - \delta)$ over draws from $v \sim V^0$, we have: $| \Loss(v) | \leq \epsilon$.
Then Compose($\A$, $b$, $r$, $\cdot$) is $(\epsilon, \delta)$-DP.
This way, privacy loss and DP are defined in terms of distinguishing
between two hypotheses indexed by $b \in \{0,1\}$.

Previous composition theorems (e.g. basic composition~\cite{Dwork:2006:CNS:2180286.2180305},
strong composition~\cite{dwork2010boosting},
and variations thereof~\cite{kairouz2015composition}) analyze Alg.~(\ref{fig:compose}) to derive various arithmetics for computing the overall DP semantic of interactions adhering to that protocol.
In particular, the basic composition theorem~\cite{Dwork:2006:CNS:2180286.2180305} proves that QueryCompose($\A$, $b$, $r$, $(\epsilon_i,\delta_i)_{i=1}^r$)
is $(\sum_{i=1}^r\epsilon_i, \sum_{i=1}^r\delta_i)$-DP.
These theorems form the basis of most ML DP work.
However, because composition is accounted for at the query level, imposing a fixed global privacy budget means that one will ``run out'' of it and stop training models even on new data.

\subsection{Block Composition for Static Datasets}
\label{sec:theory-block-composition-for-static-datasets}

Block composition improves privacy accounting for workloads where interaction consists of queries that run on {\em overlapping data subsets of diverse sizes}.
This is one of the characteristics we posit for ML workloads (requirement {\bf R1} in \S\ref{sec:sage-access-control}).
Alg.~(\ref{fig:block-compose}), {\em BlockCompose}, formalizes this type of interaction for a {\em static dataset setting} as a springboard to formalizing the full ML interaction.
We make two changes to QueryCompose.
First (line 1), the neighboring datasets are defined once and for all before
interacting. This way, training pipelines accessing non-overlapping
parts of the dataset cannot all be impacted by one entry's change.
Second (line 4), the data is split in blocks, and each DP query runs on a subset of the blocks. 

We prove that the privacy loss over the entire dataset is
the same as the maximum privacy loss on each block, accounting only for queries using this block:
\begin{ftheorem}[Reduction to Block-level Composition]
\label{theorem:block-reduction}
  The privacy loss of BlockCompose($\A$, $b$, $r$, $(\epsilon_i,\delta_i)_{i=1}^r$, $(\block_i)_{i=1}^r$)
is upper-bounded by the maximum privacy loss for any block:
{
\begin{align*}
  | \Loss(v) | \leq \max_{k} \big| \ln\Big( \prod\limits_{\substack{i=1 \\ k \in \block_i}}^r \frac{P(\O^0_i = v_i | v_{<i})}{P(\O^1_i = v_i | v_{<i})} \Big) \big| .
\end{align*}
}
\end{ftheorem}
\begin{fproof}
{Let $\D^0$ and $\D^1$ be the neighboring datasets picked by adversary $\A$, and let $k$ be the block index s.t.\
  $\D^0_l = \D^1_l$ for all $l \neq k$, and $|\D^0_{k} \oplus \D^1_{k}| \leq 1$. For any result $v$ of Alg.~(\ref{fig:block-compose}):
{\footnotesize \begin{align*}
  \big| & \Loss(v) \big| = \big| \ln \Big( \prod_{i=1}^r \frac{P(\O^0_i = v_i | v_{<i})}{P(\O^1_i = v_i | v_{<i})} \Big) \big| \\
   & = \big| \ln \Big( \prod_{\substack{i=1 \\ k \in \block_i}}^r \frac{P(\O^0_i = v_i | v_{<i})}{P(\O^1_i = v_i | v_{<i})} \Big)
   + \cancel{\ln \Big( \prod_{\substack{i=1 \\ k \notin \block_i}}^r \frac{P(\O^0_i = v_i | v_{<i})}{P(\O^1_i = v_i | v_{<i})} \Big) \big|} \\
  & \leq \max_{k} \big| \ln \Big( \prod_{\substack{i=1 \\ k \in \block_i}}^r \frac{P(\O^0_i = v_i | v_{<i})}{P(\O^1_i = v_i | v_{<i})} \Big) \big|
\end{align*}
}  The slashed term is zero because if $k \notin \block_i$, then \\ $\bigcup\limits_{j \in \block_i} \D_{j}^{0} = \bigcup\limits_{j \in \block_i} \D_{j}^{1}$, hence $\frac{P(\O^0_i = v_i | v_{<i})}{P(\O^1_i =v_i | v_{<i})} = 1$.
}
\end{fproof}

Hence, unused data blocks allow training of other (adaptively chosen) ML models, and exhausting the DP budget of a block means we retire that block of data, and not the entire data set.
This result, which can be extended to strong composition (see Appendix~\ref{a:strong-composition}), can be used to do tighter accounting than query-level accounting when the workload consists of queries on overlapping sets of data blocks (requirement {\bf R1}). However, it does not support adaptivity in block choice or a streaming setting, violating {\bf R2} and {\bf R3}.

\subsection{\sysname Block Composition}
\label{sec:theory:sage-block-composition}

Alg.~(\ref{fig:sage-block-compose}), {\em AdaptiveStreamBlockCompose}, addresses the preceding limitations with two changes.
First, supporting streams requires that datasets not be fixed before interacting, because future data depends on prior models trained and pushed into production.
The highlighted portions of lines 1-10 in Alg.~(\ref{fig:sage-block-compose}) formalize the dynamic nature of data collection by having new data blocks explicitly
depend on previously trained models, which are chosen by the adversary, in addition to other mechanisms of the world $\World$ that are not impacted by the adversary.
Fortunately, Theorem~\ref{theorem:block-reduction} still applies, because model
training can only use blocks that existed at the time of training, which in turn only depend on prior blocks through DP trained models.
Therefore, new data blocks can train new ML models, enabling endless operation on streams ({\bf R3}).

Second, interestingly, supporting adaptive choices in the data blocks implies supporting adaptive choices in the queries' DP budgets.
For a given block, one can express query $i$'s choice to use block $j$ as using a privacy budget of either $(\epsilon_i, \delta_i)$ or $(0, 0)$.
Lines 7-8 in Alg.~(\ref{fig:sage-block-compose}) formalize the adaptive choice of both privacy budgets and blocks (requirement {\bf R2}).  In supporting both, we leverage recent work on DP composition under adaptive DP budgets~\cite{rogers2016privacy}.  At each round, $\A$ requests access to a group of blocks $\block_i$, on which to run an $(\epsilon_i, \delta_i)$-DP query. \sysname's Access Control permits the query only if the privacy loss of each block in $\block_i$ will remain below $\egdg$.
Applying our Theorem~\ref{theorem:block-reduction} and \cite{rogers2016privacy}'s Theorem 3.3, we prove the following result (proof in \S\ref{a:basic-composition}):

\begin{ftheorem}[Basic Composition for Sage Block Composition]
\label{theorem:adaptive-block-basic-compose}
  AdaptiveStreamBlockCompose($\A$,$b$,$r$,$\epsilon_g$,$\delta_g$,$\World$) is $(\epsilon_g,\delta_g)$-DP if for all $k$, $\text{AccessControl}^k_{\epsilon_g,\delta_g}$ enforces:
{
\begin{equation*}
  \Big( \sum_{\substack{i=1 \\ k \in \block_i}}^r \epsilon_i(v_{<i}) \Big) \le \epsilon_g \ \text{and} \ \Big( \sum_{\substack{i=1 \\ k \in \block_i}}^r \delta_i(v_{<i}) \Big) \le \delta_g .
\end{equation*}
}
\end{ftheorem}

The implication of the theorem is that under the access control scheme described in \S\ref{sec:sage-access-control}, \sysname achieves {\em event-level $\egdg$-DP over the sensitive data stream}.
In \S\ref{a:strong-composition} we further analyze strong composition for block-level accounting.

\subsection{Defining Blocks by User ID and Other Attributes}
\label{sec:applications-to-user-level-privacy}

Block composition theory can be extended to accommodate user-level privacy and other use cases.
The theory shows that one can split a static dataset (Theorem~\ref{theorem:block-reduction}) or a data stream (Theorem~\ref{theorem:adaptive-block-basic-compose}) into disjoint blocks, and run DP queries adaptively on overlapping subsets of the blocks while accounting for privacy at the block level.
The theorems are focused on {\em time} splits, but the same theorems can be written for splits based on {\em any attribute whose possible values can be made public}, such as geography, demographics, or user IDs.
Consider a workload on a static dataset in which queries combine data from diverse and overlapping subsets of countries, e.g., they compute average salary in each country separately, but also at the level of continents and GDP-based country groups.
For such a workload, block composition gives tighter privacy accounting across these queries than traditional composition, though the system will still run out of privacy budget eventually because no new blocks appear in the static database.

As another example, splitting a stream by user ID enables querying or ignoring all observations from a given user, adding support for user-level privacy.
Splitting data over user ID requires extra care.
If existing user IDs are not knows, each query might select user IDs that do not exist yet, spending their DP budget without adding data.
However, making user IDs public can leak information.
One approach is to use incrementing user IDs (with this fact public), and periodically run a DP query computing the maximum user ID in use.
This would ensure DP, while giving an estimate of the range of user IDs that can be queried.
In such a setting, block composition enables fine-grain DP accounting over queries on any subset of the users.
While our block theory supports this use case, it suffers from a major practical challenge.
New blocks are now created only when new users join the system, so new users must be added at a high rate relative to the model release rate to avoid running out of budget.
This is unlikely to happen for mature companies, but may be possible for emerging startups or hospitals, where the stream of incoming users/patients may be high enough to sustain modest workloads.

\section{Evaluation}
\label{sec:evaluation}

We ask four questions:
({\bf Q1})~Does DP impact Training Pipeline reliability?
({\bf Q2})~Does \iterativetraining increase DP Training Pipeline reliability?
({\bf Q3})~Does block composition help over traditional composition?
({\bf Q4})~How do ML workloads perform under \sysname's $\egdg$-DP regime?

\begin{table}
	\footnotesize
	\centering
	\begin{tabular}{l|cl}
		\hline
        \multicolumn{3}{c}{{\bf Taxi Regression Task}}\\
		\hline
		{\bf Pipelines:} & \multicolumn{2}{l}{{\bf Configuration:}} \\
		\hline
		\multirow{4}{1.3cm}{Linear Regression ({\bf LR})} & DP Alg. & AdaSSP from \cite{wang2018revisiting}, $(\e,\delta)$-DP\\
		\cline{2-3}
		& Config. & Regularization param $\rho: 0.1$ \\
		\cline{2-3}
		& Budgets & $\ed \in \{(1.0, 10^{-6}), (0.05, 10^{-6})\}$ \\
		\cline{2-3}
		& Targets & MSE $\in [2.4\times10^{-3},7\times10^{-3}]$  \\
\hline
		\multirow{5}{1.3cm}{Neural Network ({\bf NN})} & DP Alg. & DP SGD from \cite{abadi2016deep}, $\ed$-DP\\
		\cline{2-3}
		&           & ReLU, 2 hidden layers (5000/100 nodes)\\
		& Config. &Learning rate: 0.01, Epochs: 3\\
		&            & Batch: 1024, Momentum: 0.9\\
		\cline{2-3}
		& Budgets & $\ed \in \{(1.0,10^{-6}),(0.5,10^{-6})\}$ \\
		\cline{2-3}
		& Targets & MSE $\in [2\times10^{-3},7\times10^{-3}]$\\
		\hline
Avg.Speed x3* & Targets & Absolute error $ \in \{1, 5, 7.5, 10, 15\}$ km/h\\
		\hline
		\multicolumn{3}{c}{{ }}\\
		\hline
        \multicolumn{3}{c}{{\bf Criteo Classification Task}}\\
		\hline
        {\bf Pipelines:} & \multicolumn{2}{l}{{\bf Configuration:}} \\
		\hline
		\multirow{4}{1.3cm}{Logistic Regression ({\bf LG})} & DP Alg. & DP SGD from \cite{mcmahan2018general}, $(\e,\delta)$-DP\\
		\cline{2-3}
		& Config. & Learning rate: $0.1$, Epochs: 3 Batch: 512\\
		\cline{2-3}
		& Budgets & $\ed \in \{(1.0, 10^{-6}), (0.25, 10^{-6})\}$ \\
		\cline{2-3}
		& Targets & Accuracy $\in [0.74, 0.78]$  \\
		\hline
\multirow{5}{1.3cm}{Neural Network ({\bf NN})} & DP Alg. & DP SGD from \cite{mcmahan2018general}, $\ed$-DP\\
		\cline{2-3}
        &         & ReLU, 2 hidden layers (1024/32 nodes) \\
		& Config. & Learning rate: 0.01, Epochs: 5\\
		&            & Batch: 1024 \\
		\cline{2-3}
		& Budgets & $\ed \in \{(1.0,10^{-6}),(0.25,10^{-6})\}$ \\
		\cline{2-3}
		& Targets & Accuracy $\in [0.74, 0.78]$  \\
		\hline
Counts x26** & Targets & Absolute error $ \in \{0.01,0.05,0.10\}$ \\
        \hline

	\end{tabular}
    \caption{{{\bf Experimental Training Pipelines.} *Three time granularities: hour of day, day of week, week of month. **Histogram of each categorical feature.}}
	\vspace{-0.5cm}
	\label{t:private-models}
\end{table}

\begin{figure*}[t!]
	\centering
	\footnotesize
	\begin{subfigure}[t]{0.245\linewidth}
		\includegraphics[width=\linewidth]{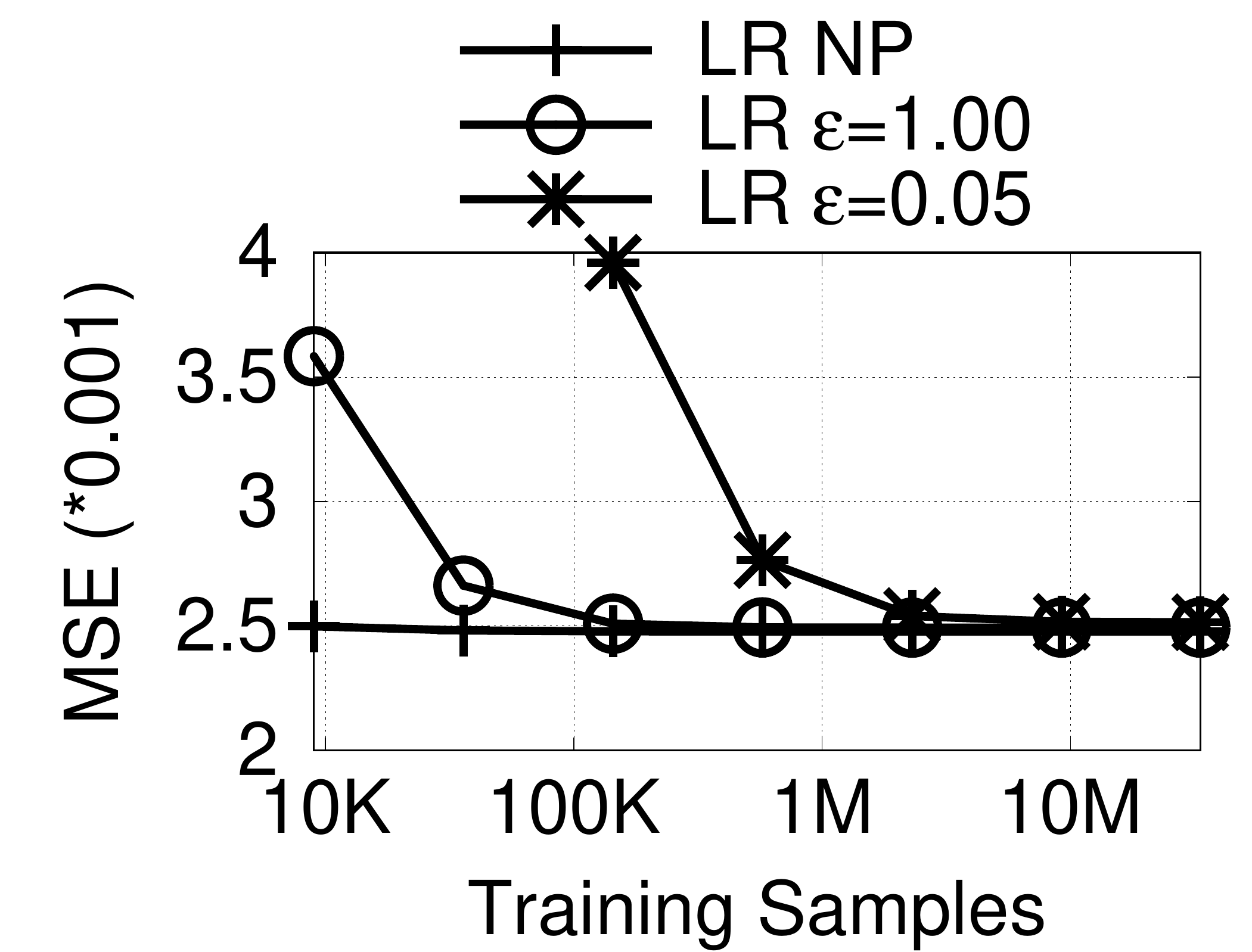}
		\caption{{\bf Taxi LR MSE}}
		\label{fig:evaluation:impact-of-dp-on-model-accuracy-taxi-linear}
	\end{subfigure}
\begin{subfigure}[t]{0.245\linewidth}
		\includegraphics[width=\linewidth]{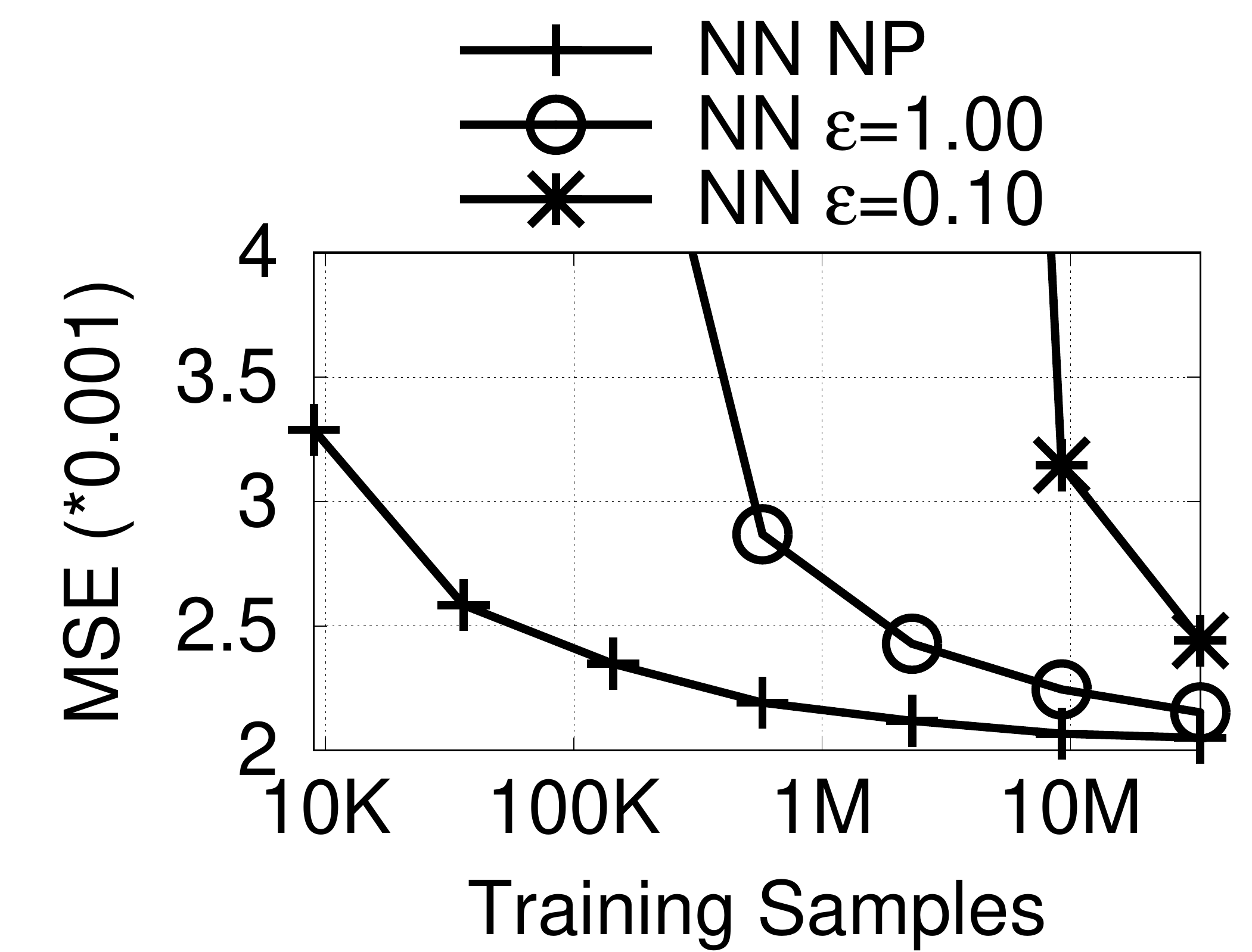}
		\caption{{\bf Taxi NN MSE}}
		\label{fig:evaluation:impact-of-dp-on-model-accuracy-taxi-nn}
	\end{subfigure}
\begin{subfigure}[t]{0.245\linewidth}
        \includegraphics[width=\linewidth]{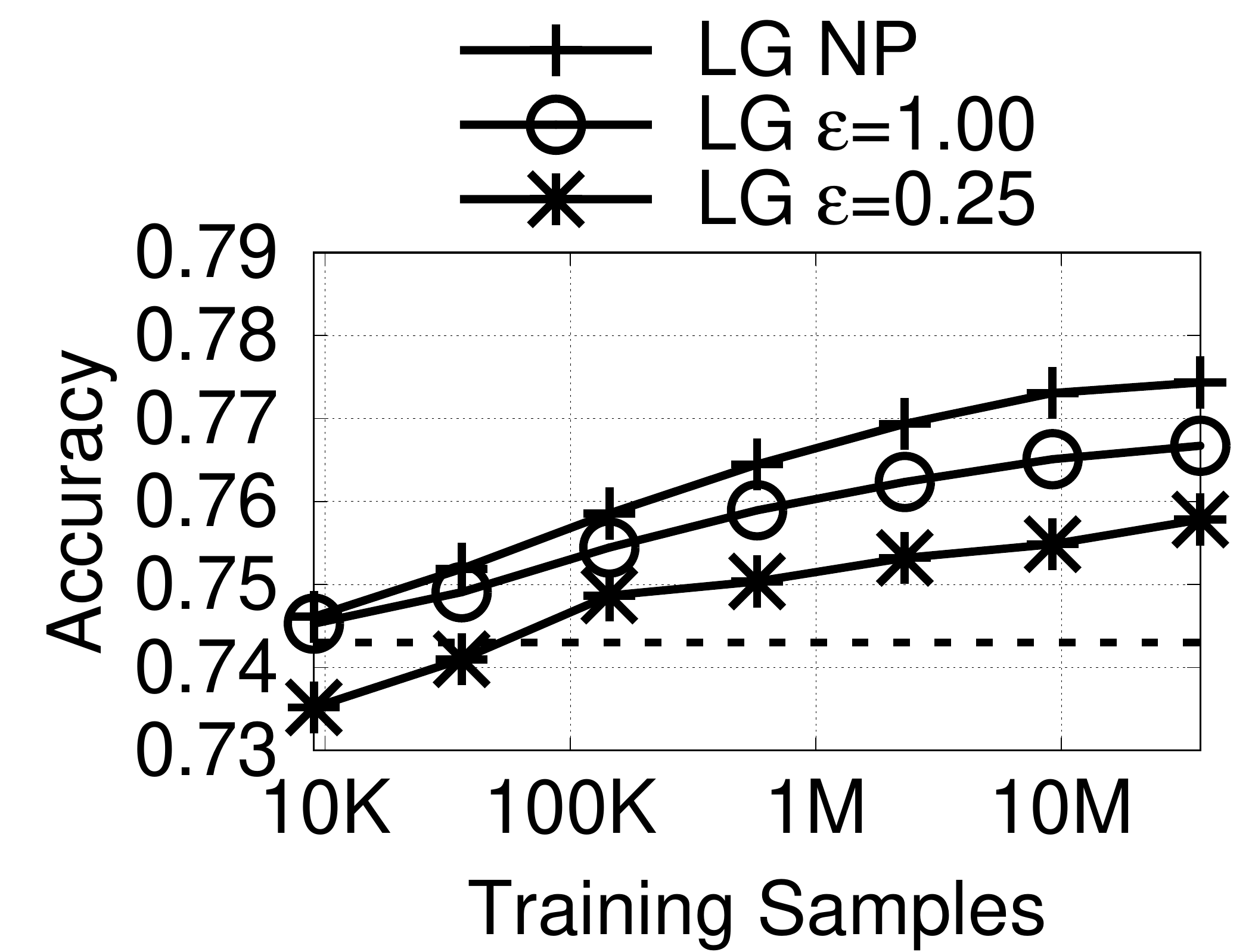}
		\caption{{\bf Criteo LG Accuracy}}
		\label{fig:evaluation:impact-of-dp-on-model-accuracy-linear-criteo-classification}	
	\end{subfigure}
  \begin{subfigure}[t]{0.245\linewidth}
        \includegraphics[width=\linewidth]{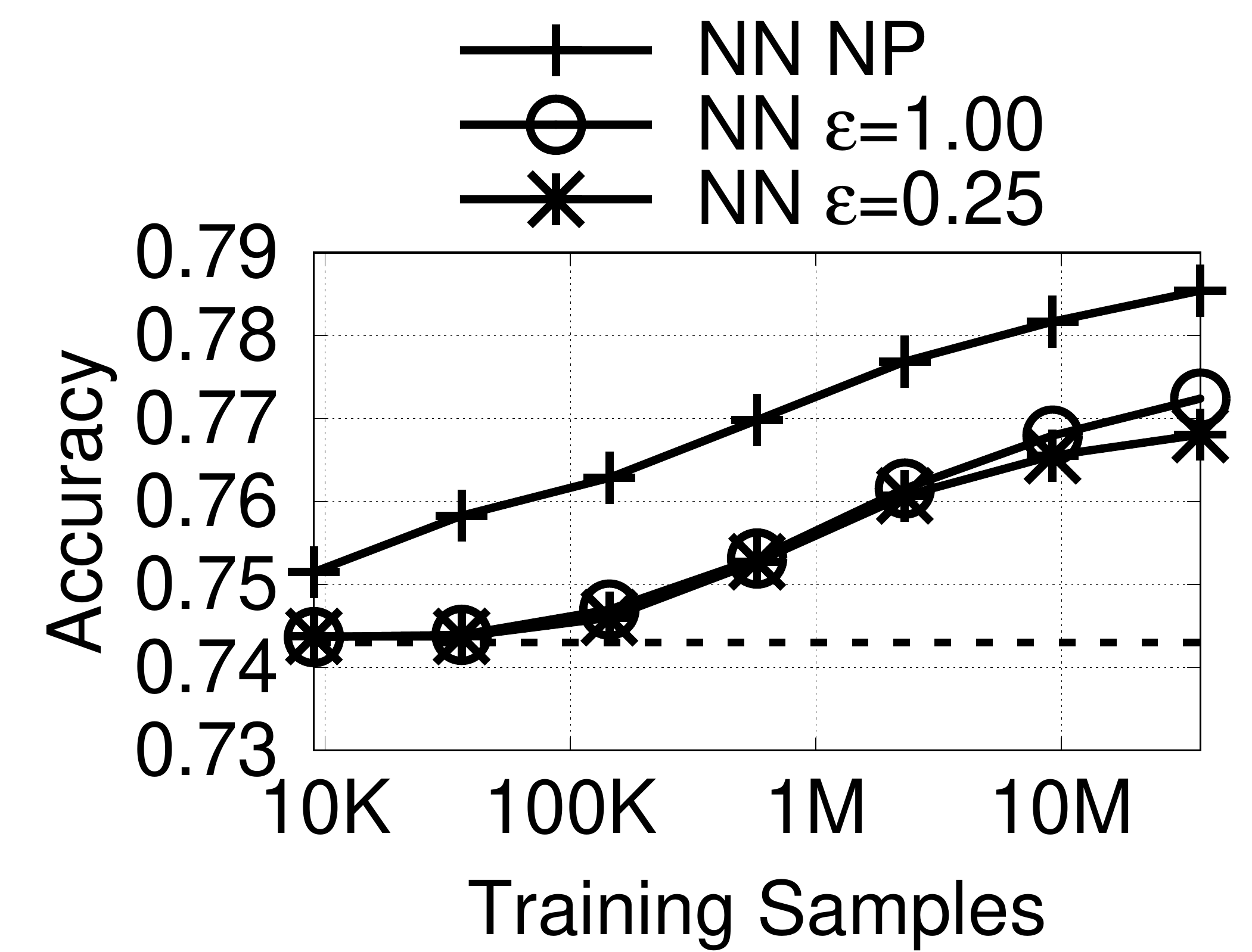}
		\caption{{\bf Criteo NN Accuracy}}
		\label{fig:evaluation:impact-of-dp-on-model-accuracy-nn-criteo-classification}
	\end{subfigure}
	\vspace{-0.4cm}
	\caption{{\bf Impacts on TFX Training Pipelines.}
        Impact of DP on the overall performance of training pipelines.
        \ref{fig:evaluation:impact-of-dp-on-model-accuracy-taxi-linear}, and
        \ref{fig:evaluation:impact-of-dp-on-model-accuracy-taxi-nn}
show the MSE loss on the Taxi regression task (lower is better).
        \ref{fig:evaluation:impact-of-dp-on-model-accuracy-linear-criteo-classification};
        \ref{fig:evaluation:impact-of-dp-on-model-accuracy-nn-criteo-classification} show the accuracy on the Criteo classification task (higher is better).
The dotted lines are na\"ive model performance.
	}  \label{fig:evaluation:impacts-of-dp-on-training-pipelines}
	\vspace{-0.3cm}
\end{figure*}

\heading{Methodology.}
We consider two datasets: 37M-samples from three months of NYC taxi rides~\cite{yellowCabData} and 45M ad impressions from Criteo~\cite{criteoKaggle}.
On the Taxi dataset we define a regression task to predict the duration of each ride using 61 binary features derived from 10 contextual features.
We implement pipelines for a {\em linear regression (LR)}, a {\em neural network (NN)}, and three statistics (average speeds at three time granularities).
On the Criteo dataset we formulate a binary classification task predicting ad clicks from 13 numeric and 26 categorical features.
We implement a {\em logistic regression (LG)}, a {\em neural network (NN)}, and histogram pipelines.
\T~\ref{t:private-models} shows details.

{\em Training:} We make each pipeline DP using known algorithms, shown in \T~\ref{t:private-models}.
{\em Validation:} We use the loss, accuracy, and absolute error SLAed validators on the regression, classification, and statistics respectively.
{\em Experiments:} Each model is assigned a quality target from a range of possible values, chosen between the best achievable model, and the performance of a na\"ive model (predicting the label mean on Taxi, with MSE $0.0069$, and the most common label on Criteo, with accuracy 74.3\%).
Most evaluation uses \iterativetraining, so privacy budgets are chosen by \sysname, with an upper-bound of $\epsilon=1$.
While no consensus exists on what a reasonable DP budget is, this value is in line with practical prior work~\cite{abadi2016deep,McMahan2018LearningDP}.
Where DP budgets must be fixed, we use values indicated in \T~\ref{t:private-models} which correspond to a large budget ($\epsilon=1$), and a small budget that varies across tasks and models.
Other defaults: 90\%::10\% train::test ratio; $\eta=0.05$; $\delta=10^{-6}$.
{\em Comparisons:} We compare \sysname's performance to existing DP composition approaches described in \S\ref{sec:sage-access-control}.
We ignore the first alternative, which violates the endless execution requirement {\bf R3} and cannot support ML workloads.
We compare with the second and third alternatives, which we call {\em query composition} and {\em streaming composition}, respectively.

\subsection{Unreliability of DP Training Pipelines in TFX (Q1)}
\label{sec:evaluation:dp-training-pipeline}

We first evaluate DP's impact on model training.
\F\ref{fig:evaluation:impacts-of-dp-on-training-pipelines} shows the loss or accuracy of each model when trained on increasing amounts data and evaluated on 100K held-out samples from their respective datasets.
Three versions are shown for each model: the non-DP version (NP), a large DP budget version ($\e=1$), and a small DP budget configuration with $\e$ values that vary across the model and task.
For both tasks, the NN requires the most data but outperforms the linear model in the private and non-private settings.
The DP LRs catch up to the non-DP version with the full dataset,
but the other models trained with SGD require more data.  Thus, model quality is impacted by DP but the impact diminishes with more training data.
This motivates \iterativetraining.

\begin{table}
	\centering
	\footnotesize
\begin{tabular}{|c|c|c|c|c|c|}
		\hline
		{\bf Dataset} & {\bf $\eta$} & {\bf No SLA} & {\bf NP SLA} & {\bf UC DP SLA} & {\bf Sage SLA}\\
		\hline
		\multirow{2}{*}{Taxi} & 0.01 & 0.379 & 0.0019 & 0.0172 & 0.0027\\
& 0.05 & 0.379 & 0.0034 & 0.0224 & 0.0051\\
		\hline
		\multirow{2}{*}{Criteo} & 0.01 & 0.2515 & 0.0052 & 0.0544    & 0.0018\\
& 0.05 & 0.2515 & 0.0065 & 0.0556    & 0.0023\\
		\hline
	\end{tabular}
	\caption{\footnotesize {\bf Target Violation Rate of {\ACCEPT}ed Models.}
		Violations are across all models separately trained with \iterativetraining.}
	\label{tab:evaluation:acceptance-violation-rate}
	\vspace{-0.7cm}
\end{table}

To evaluate DP's impact on validation, we train and validate our models for both tasks, with and without DP. We use TFX's vanilla validators, which compare the model's performance on a test set to the quality metric (MSE for taxi, accuracy for Criteo).
We then re-evaluate the models' quality metrics on a separate, 100K-sample held-out set and measure the fraction of models accepted by TFX that violate their targets on the re-evaluation set.
With non-DP pipelines (non-DP training and validation), the false acceptance rate is 5.1\% and 8.2\% for the Taxi and Criteo tasks respectively.  With DP pipelines (DP training, DP validation), false acceptance rates hike to 37.9\% and 25.2\%, motivating SLAed validation.

\begin{figure*}[t!]
  \begin{subfigure}[t]{0.245\linewidth}
	\includegraphics[width=\linewidth]{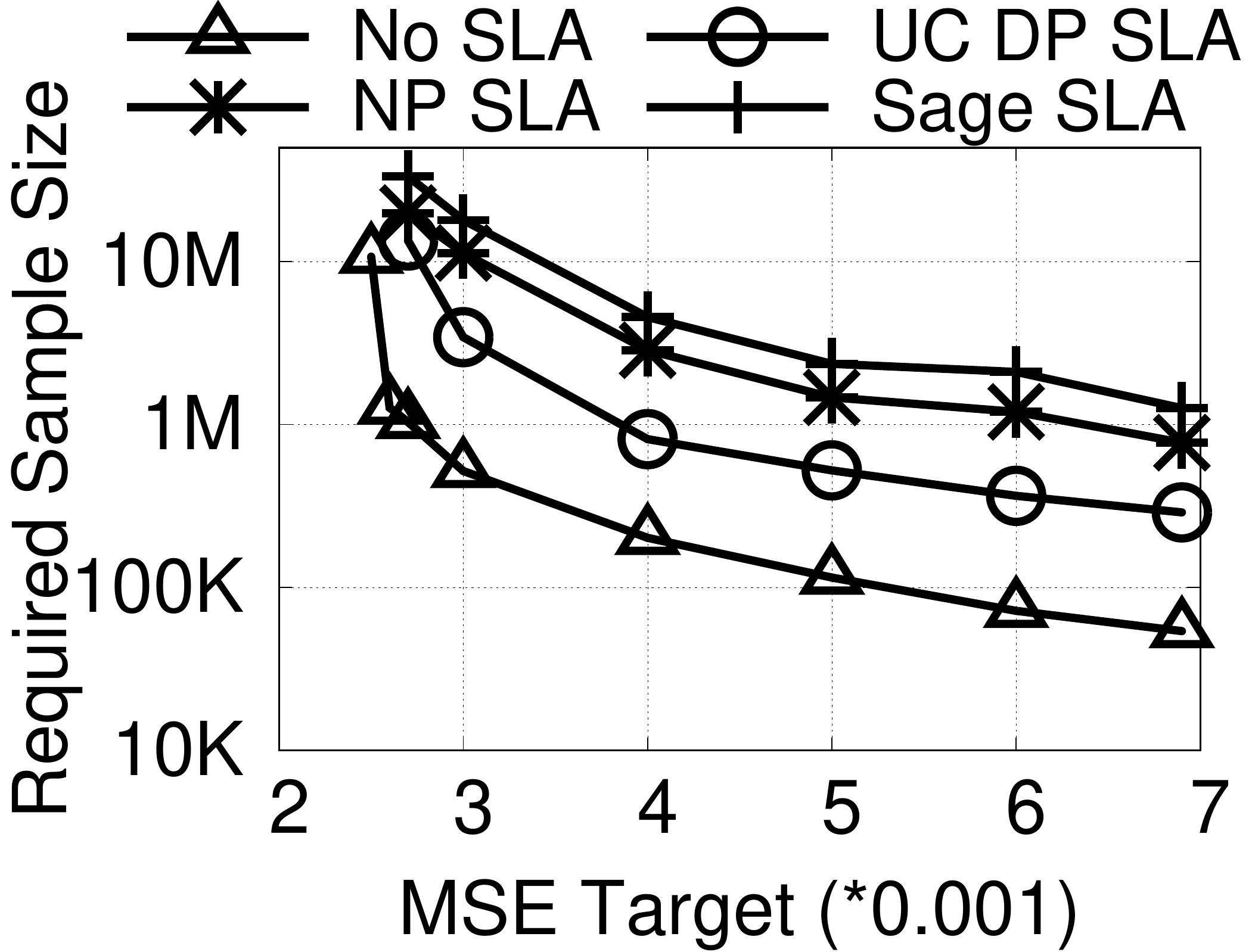}
		\caption{{\bf Taxi LR \ACCEPT}}
		\label{f:lr_required_sample_complexity}
	\end{subfigure}
  \begin{subfigure}[t]{0.245\linewidth}
		\includegraphics[width=\linewidth]{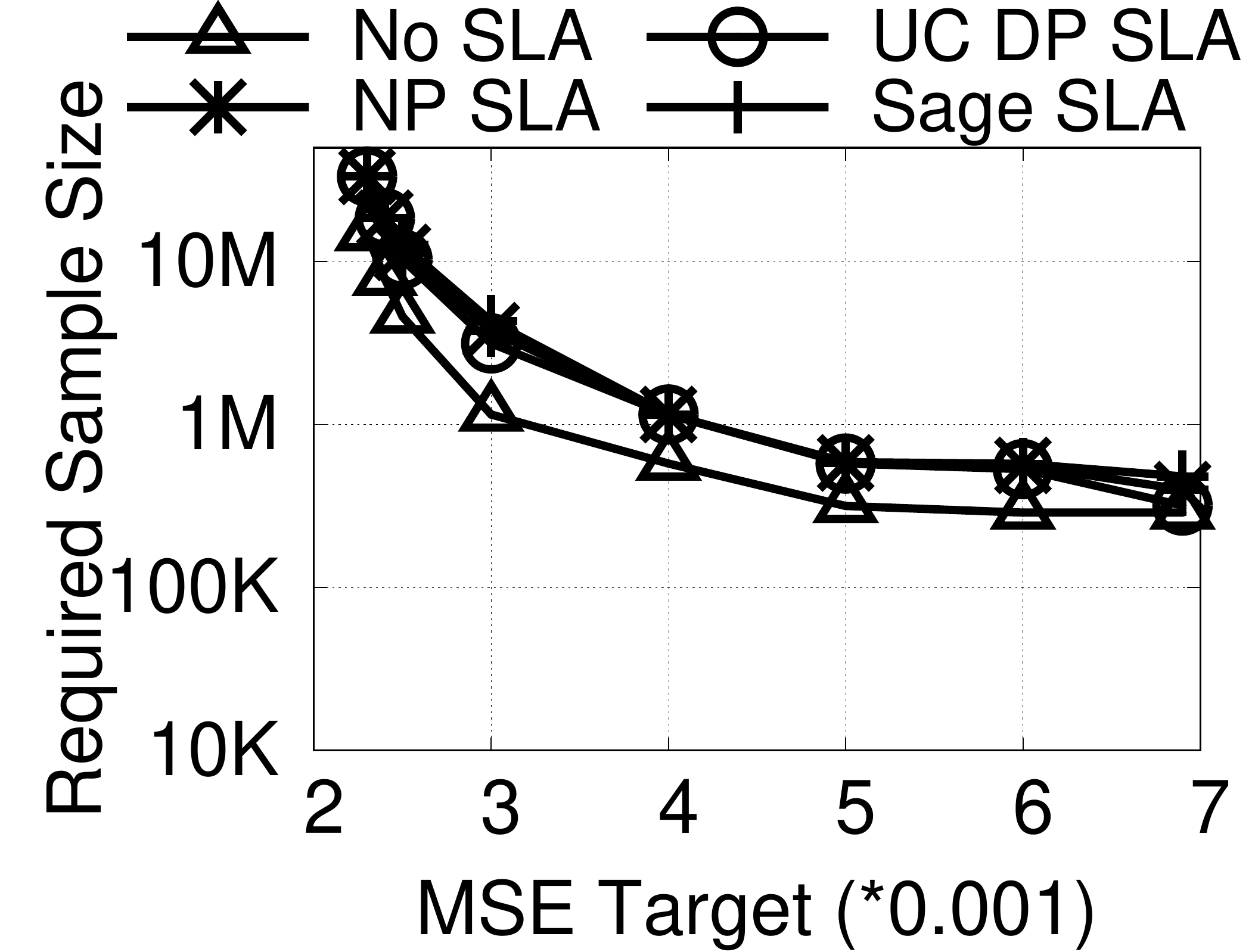}
		\caption{{\bf Taxi NN \ACCEPT}}
		\label{f:dnn_required_sample_complexity}
	\end{subfigure}
\begin{subfigure}[t]{0.245\linewidth}
        \includegraphics[width=\linewidth]{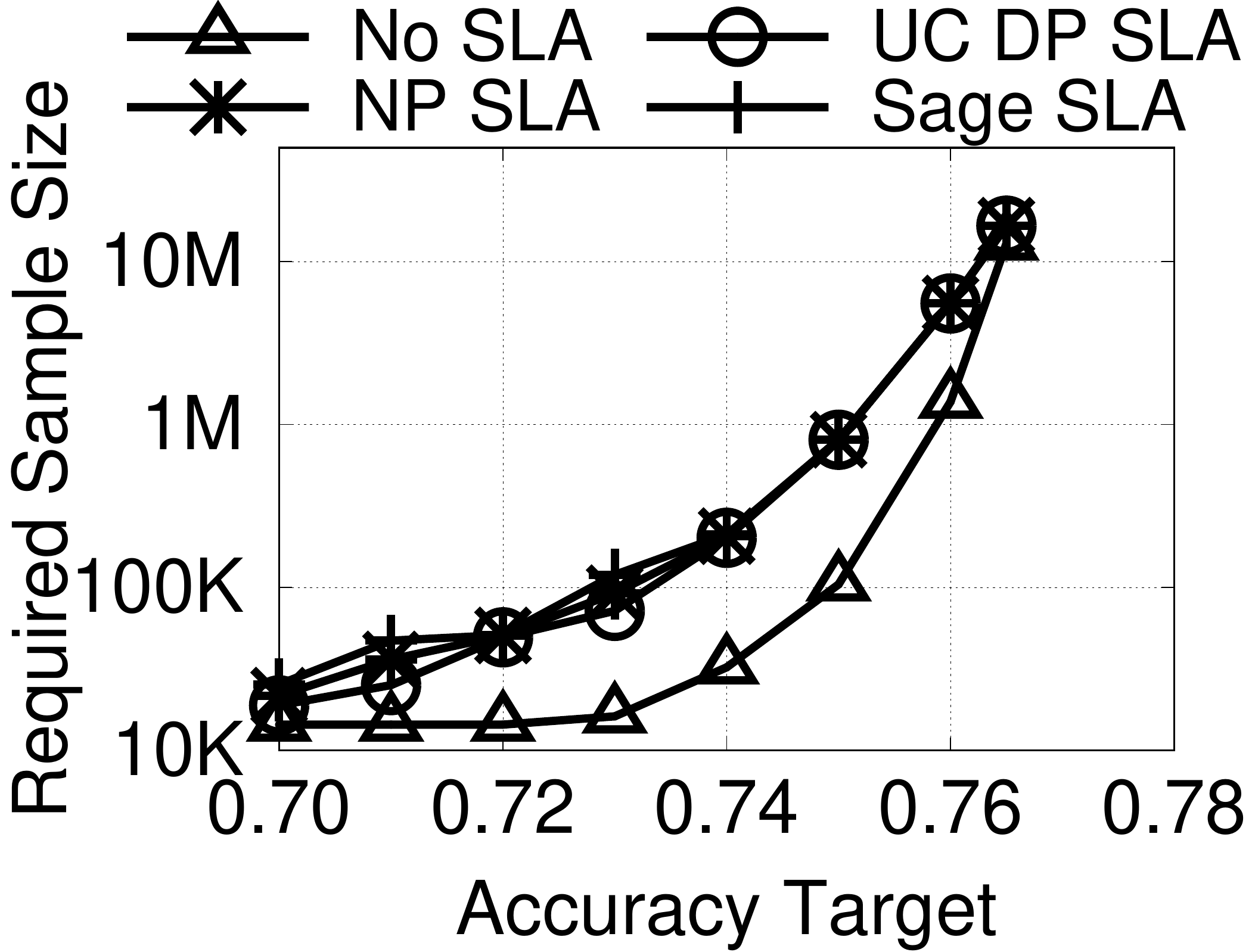}
		\caption{{\bf Criteo LG \ACCEPT}}
		\label{f:criteo_lr_required_sample_complexity_classification}
	\end{subfigure}
  \begin{subfigure}[t]{0.245\linewidth}
        \includegraphics[width=\linewidth]{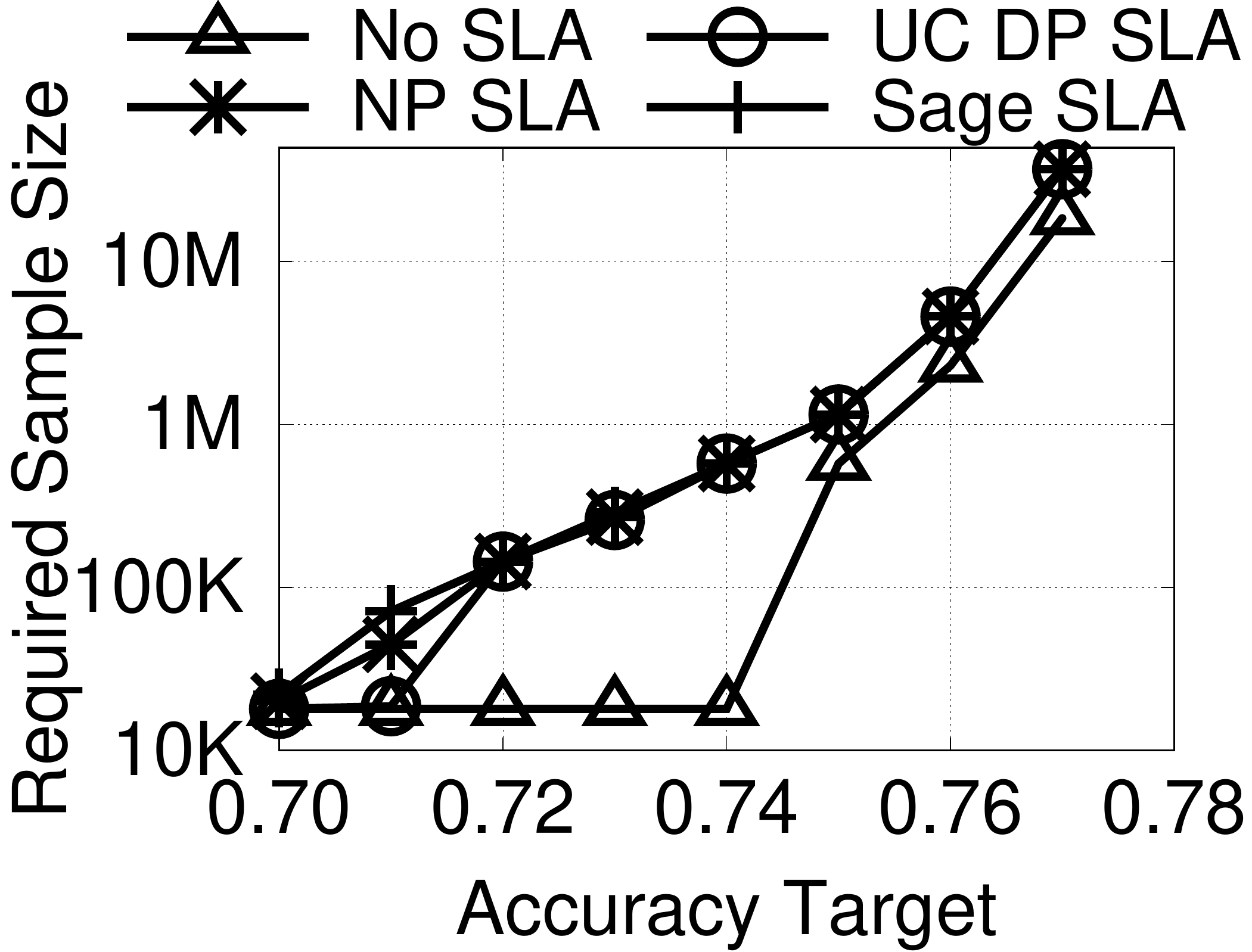}
		\caption{{\bf Criteo NN \ACCEPT}}
		\label{f:criteo_dnn_required_sample_complexity_classification}
	\end{subfigure}
	\vspace{-0.4cm}
	\caption{\footnotesize {\bf Number of Samples Required to \ACCEPT} models at achievable quality targets.
        For MSE targets (Taxi regression \ref{f:lr_required_sample_complexity}, and \ref{f:dnn_required_sample_complexity}) small targets are harder to achieve and require more samples.
        For accuracy targets (Criteo classification \ref{f:criteo_lr_required_sample_complexity_classification}, and \ref{f:criteo_dnn_required_sample_complexity_classification}) high targets are harder and require more samples.
}
	\label{fig:evaluation:data-needed-for-slaed-validation}
	\vspace{-0.3cm}
\end{figure*}

\subsection{Reliability of DP Training Pipelines in \sysname (Q2)}
\label{sec:evaluation:slaed-validation}

\sysname's \iterativetraining and SLAed validation are designed to add reliability to DP model training and validation. However, they may come at a cost of increased data requirements over a non-DP test.
We evaluate reliability and sample complexity for the SLAed validation {\ACCEPT} test.

\T~\ref{tab:evaluation:acceptance-violation-rate} shows the fraction of {\ACCEPT}ed models that violate their quality targets when re-evaluated on the 100K-sample held-out set.
For two confidences $\eta$, we show:
(1) {\em No SLA}, the vanilla TFX validation with no statistical rigor, but where a model's quality is computed with DP.
(2) {\em NP SLA}, a non-DP but statistically rigorous validation. This is the best we can achieve with statistical confidence.
(3) {\em UC DP SLA}, a DP SLAed validation without the correction for DP impact.
(4) {\em \sysname SLA}, our DP SLAed validator, with correction.
We make three observations.
First, the NP SLA violation rates are much lower than the configured $\eta$ values because we use conservative statistical tests.
Second, \sysname's DP-corrected validation accepts models with violation rates close to the NP SLA. Slightly higher for the loss SLA and slightly lower for the accuracy SLA, but {\em well below the configured error rates}.
Third, removing the correction increases the violation rate by 5x for the loss SLA and 20x for the accuracy SLA, violating the confidence thresholds in both cases, at least for low $\eta$.
These results confirm that Sage's SLAed validation is reliable, and that correction for DP is critical to this reliability.

The increased reliability of SLAed validation comes at a cost: SLAed validation requires more data compared to a non-DP test.  This new data is supplemented by \sysname's \iterativetraining.
\F~\ref{f:lr_required_sample_complexity} and \ref{f:dnn_required_sample_complexity} show the amount of train+test data required to {\ACCEPT} a model under various loss targets for the Taxi regression task.
\F~\ref{f:criteo_lr_required_sample_complexity_classification} and \ref{f:criteo_dnn_required_sample_complexity_classification} show the same for accuracy targets for the Criteo classification task.
We make three observations.
First, unsurprisingly, non-rigorous validation (No SLA) requires the least data but has a high failure rate because it erroneously accepts models on small sample sizes.
Second, the best model accepted by Sage's SLA validation are close to the best model accepted by No SLA.
We observe the largest difference in Taxi LR where No SLA accepts MSE targets of $0.0025$ while the Sage SLA accepts as low as $0.0027$.
The best achievable model is slightly impacted by DP, although more data is required.
Third, adding a statistical guarantee but no privacy to the validation (NP SLA) already substantially increases sample complexity.
Adding DP to the statistical guarantee and applying the DP correction incurs limited additional overhead.
The distinction between Sage and NP SLA is barely visible for all but the Taxi LR.
For Taxi LR, adding DP accounts for half of the increase over No SLA requiring twice as much data (one data growth step in \iterativetraining).
Thus, \iterativetraining increases reliability of DP training pipelines for reasonable increase in sample complexity.

\subsection{Benefit of Block Composition (Q3)}
\label{sec:evaluation:block-composition}

\begin{figure*}[t]
	\begin{subfigure}{0.245\linewidth}
		\includegraphics[width=\linewidth]{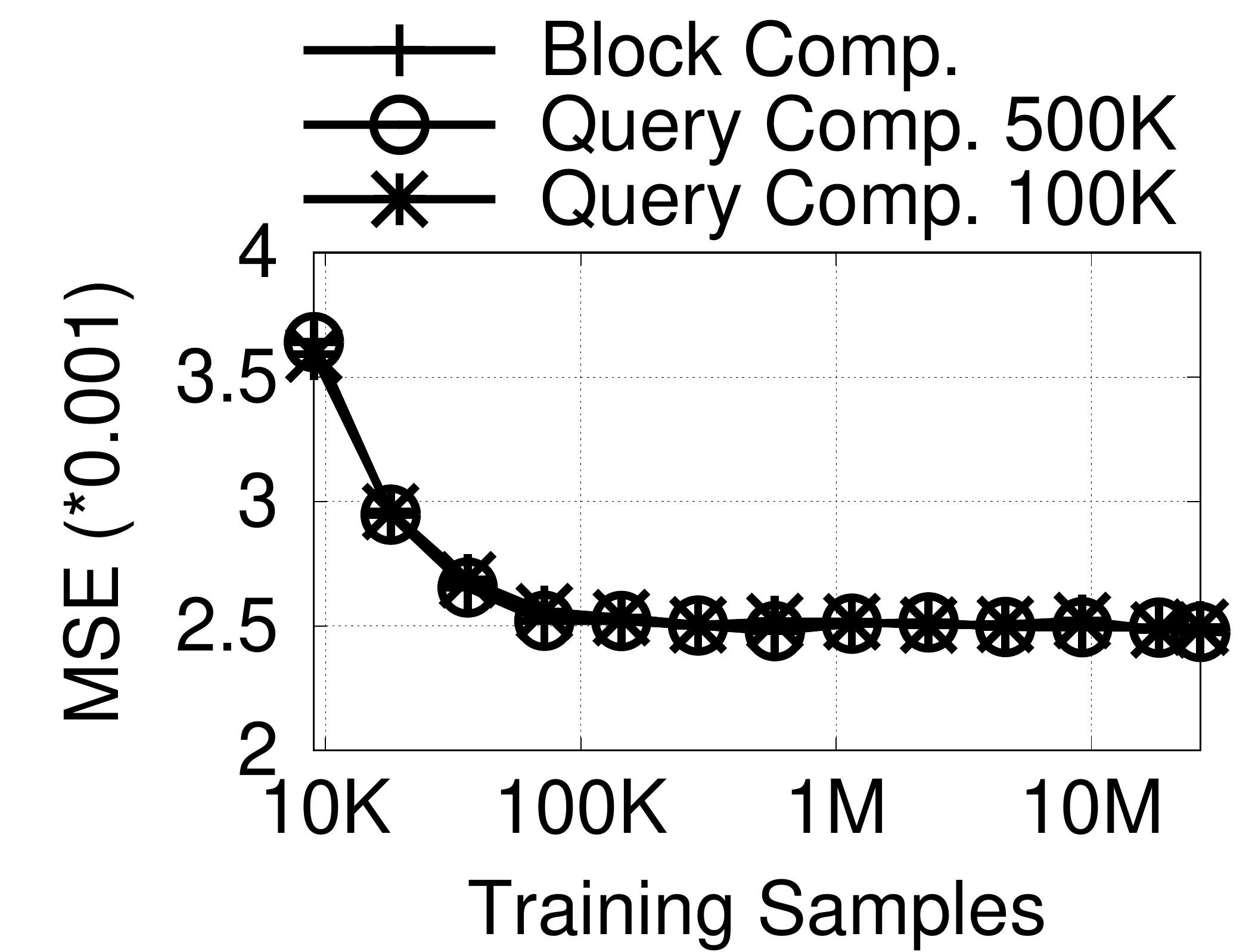}
		\caption{{\bf Taxi LR MSE}}
		\label{fig:evaluation:block-level-accounting-quality-lr}
	\end{subfigure}
	\begin{subfigure}{0.245\linewidth}
        \includegraphics[width=\linewidth]{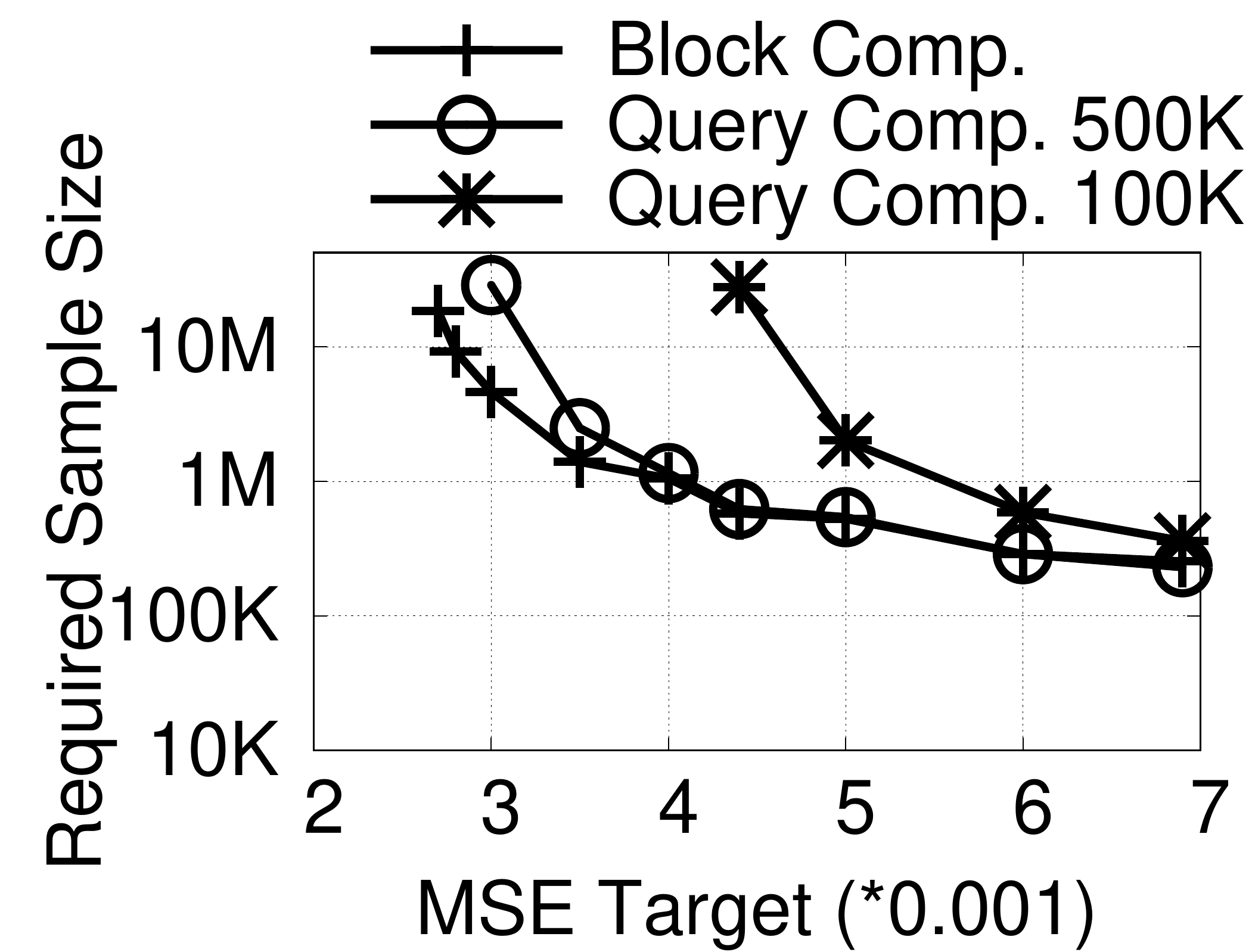}
		\caption{{\bf Taxi LR \ACCEPT}}
		\label{fig:evaluation:block-level-accounting-sc-lr}
	\end{subfigure}
    \begin{subfigure}{0.245\linewidth}
		\includegraphics[width=\linewidth]{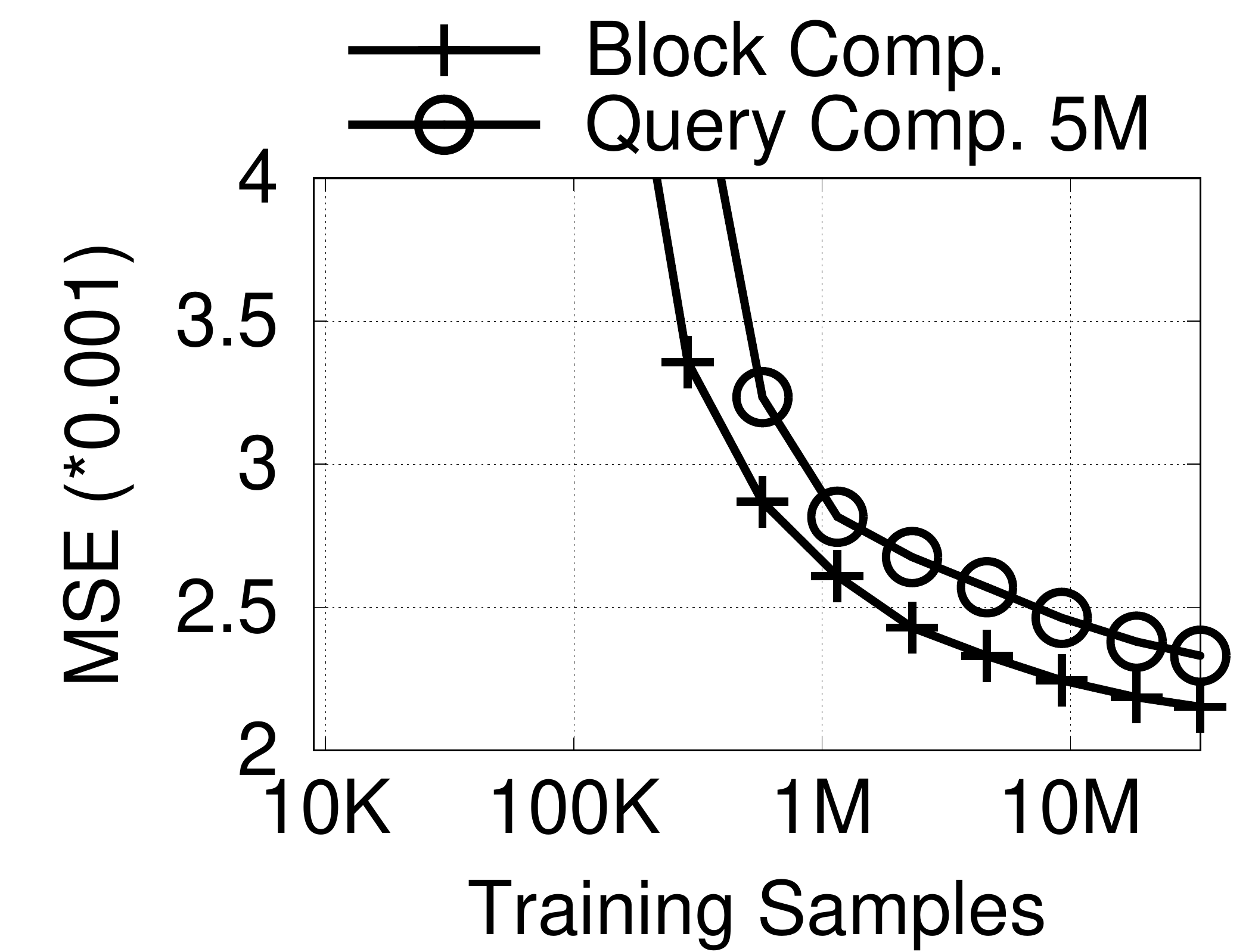}
		\caption{{\bf Taxi DNN MSE}}
		\label{fig:evaluation:block-level-accounting-quality-dnn}
	\end{subfigure}
	\begin{subfigure}{0.245\linewidth}
        \includegraphics[width=\linewidth]{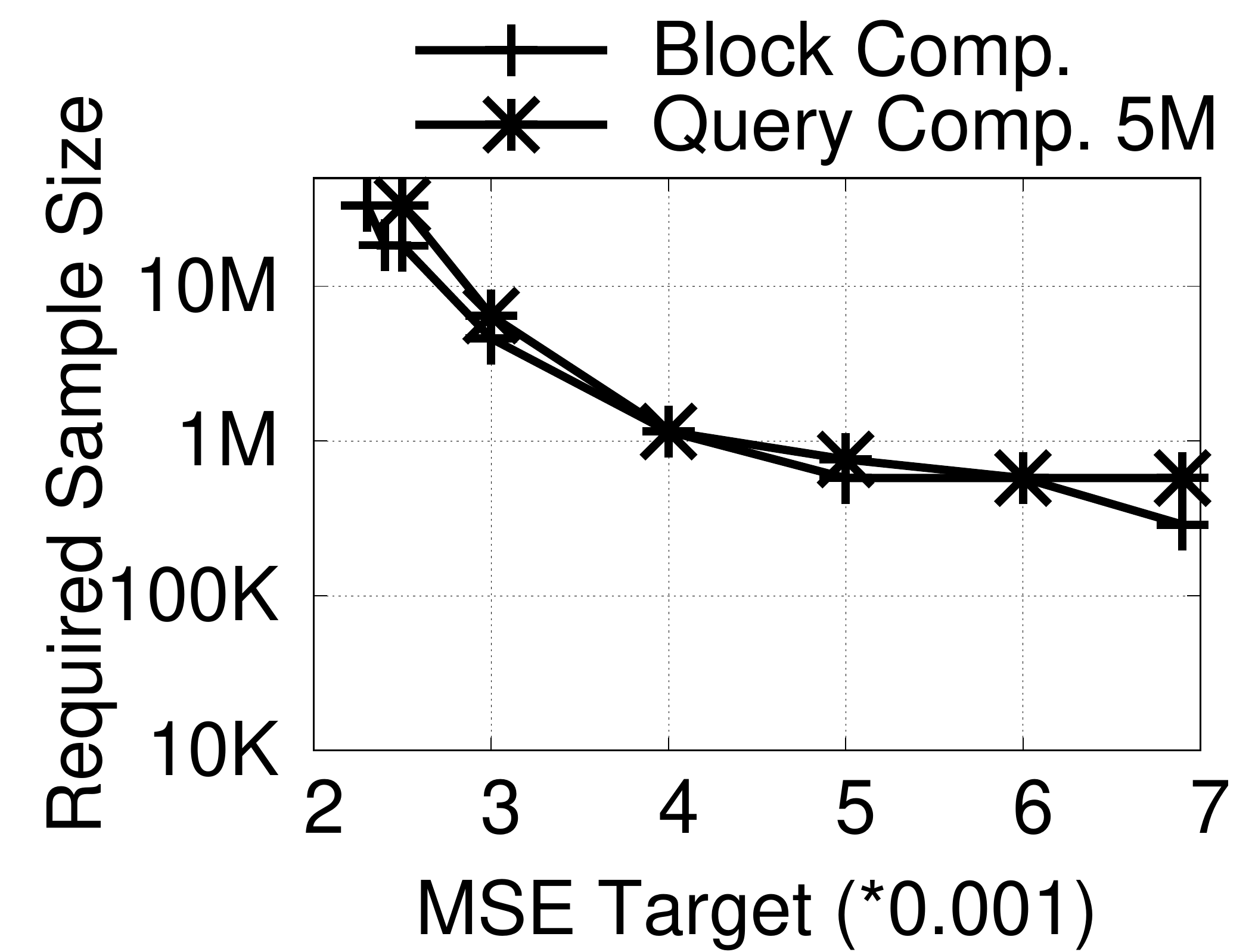}
		\caption{{\bf Taxi DNN \ACCEPT}}
		\label{fig:evaluation:block-level-accounting-sc-dnn}
	\end{subfigure}

	\vspace{-0.4cm}
	\caption{\footnotesize {\bf Block-level vs. Query-level Accounting.} Block-level query accounting provides benefits to model quality and validation.
	}
	\label{fig:evaluation:block-level-accounting}
	\vspace{-0.3cm}
\end{figure*}

Block composition lets us combine multiple blocks into a dataset, such that each DP query runs over all used blocks with only one noise draw.
Without block composition a DP query is split into multiple queries, each operating on a single block, and receiving independent noise. The combined results are more noisy.
\F~\ref{fig:evaluation:block-level-accounting-quality-lr} and~\ref{fig:evaluation:block-level-accounting-quality-dnn} show the model quality of the LR and NN models on the Taxi dataset, when operating on blocks of different sizes, 100K and 500K for the LR, and 5M for the NN.
\F~\ref{fig:evaluation:block-level-accounting-sc-lr} and~\ref{fig:evaluation:block-level-accounting-sc-dnn} show the SLAed validation sample complexity of the same models.
We compare these configurations against \sysname's combined-block training that allows ML training and validation to operate on their full relevance windows.
We can see that block composition helps both the training and validation stages.
While LR training (\F~\ref{fig:evaluation:block-level-accounting-quality-lr}) performs nearly identically for \sysname and block sizes of 100K or 500K (6h of data to a bit more than a day), validation is significantly impacted.
The LR cannot be validated with any MSE better than $0.003$ with block sizes of 500K, and $0.0044$ for blocks of size 100K.
Additionally, those targets that can be validated require significantly more data without \sysname's block composition: 10x for blocks of size 500K, and almost 100x for blocks of 100K.
The NN is more affected at training time. With blocks smaller than 1M points, it cannot even be trained.  Even with an enormous block size of 5M, more than ten days of data (\F~\ref{fig:evaluation:block-level-accounting-quality-dnn}), the partitioned model performs 8\% worse than with \sysname's block composition. Although on such large blocks validation itself is not much affected, the worse performance means that models can be validated up to an MSE target of $0.0025$ (against \sysname's $0.0023$), and requires twice as much data as with block composition.

\subsection{Multi-pipeline Workload Performance (Q4)}
\label{sec:evaluation:end-to-end-experiment}

Last is an end-to-end evaluation of \sysname with a workload consisting of a data stream and ML pipelines arriving over discrete time steps.
At each step, a number of new data points corresponding approximately to 1 hour of data arrives (16K for Taxi, 267K for Criteo).
The time between new pipelines is drawn from a Gamma distribution. When a new pipeline arrives, its sample complexity (number of data points required to reach the target) is drawn from a power law distribution, and a pipeline with the relevant sample complexity is chosen uniformly among our configurations and targets (\T~\ref{t:private-models}).
Under this workload, we compare the average model release in steady state for four different strategies. This first two leverage {\em Query Composition} and {\em Streaming Composition} from prior work, as explained in methodology and \S~\ref{sec:sage-iterative-training}. The other two take advantage of \sysname's Block Composition.
Both strategies uniformly divide the privacy budget of new blocks among all incomplete pipelines, but differ in how each pipeline uses its budget.
{\em Block/Aggressive} uses as much privacy budget as is available when a pipeline is invoked.
{\em Block/Conserve (\sysname)} uses the \IterativeTraining strategy defined in \S~\ref{sec:sage-iterative-training}.

\begin{figure}[t]
	\begin{subfigure}{0.495\linewidth}
    \includegraphics[width=\linewidth]{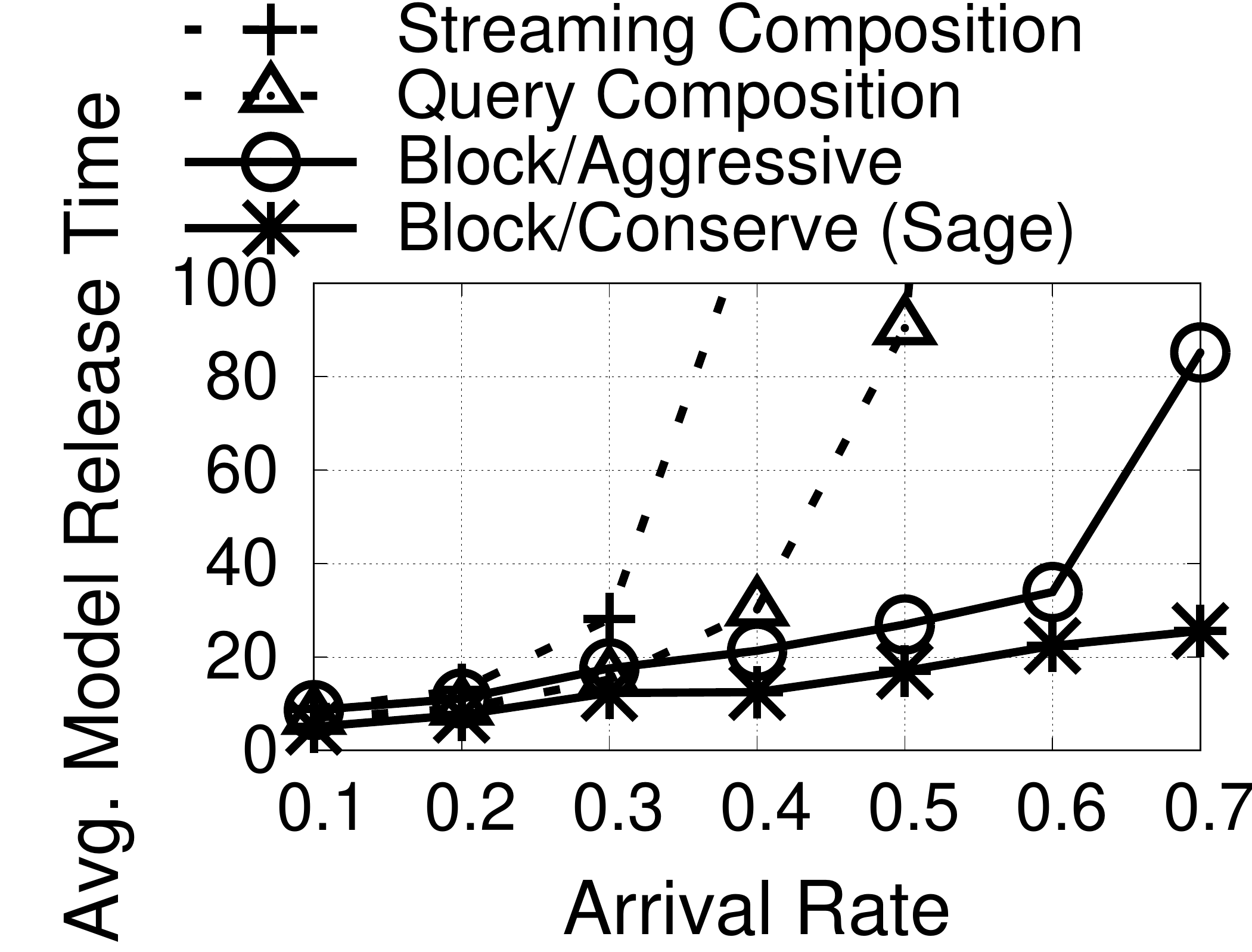}
		\caption{{\bf Taxi Dataset}}
		\label{fig:evaluation:workload-taxi}
	\end{subfigure}
	\begin{subfigure}{0.495\linewidth}
    \includegraphics[width=\linewidth]{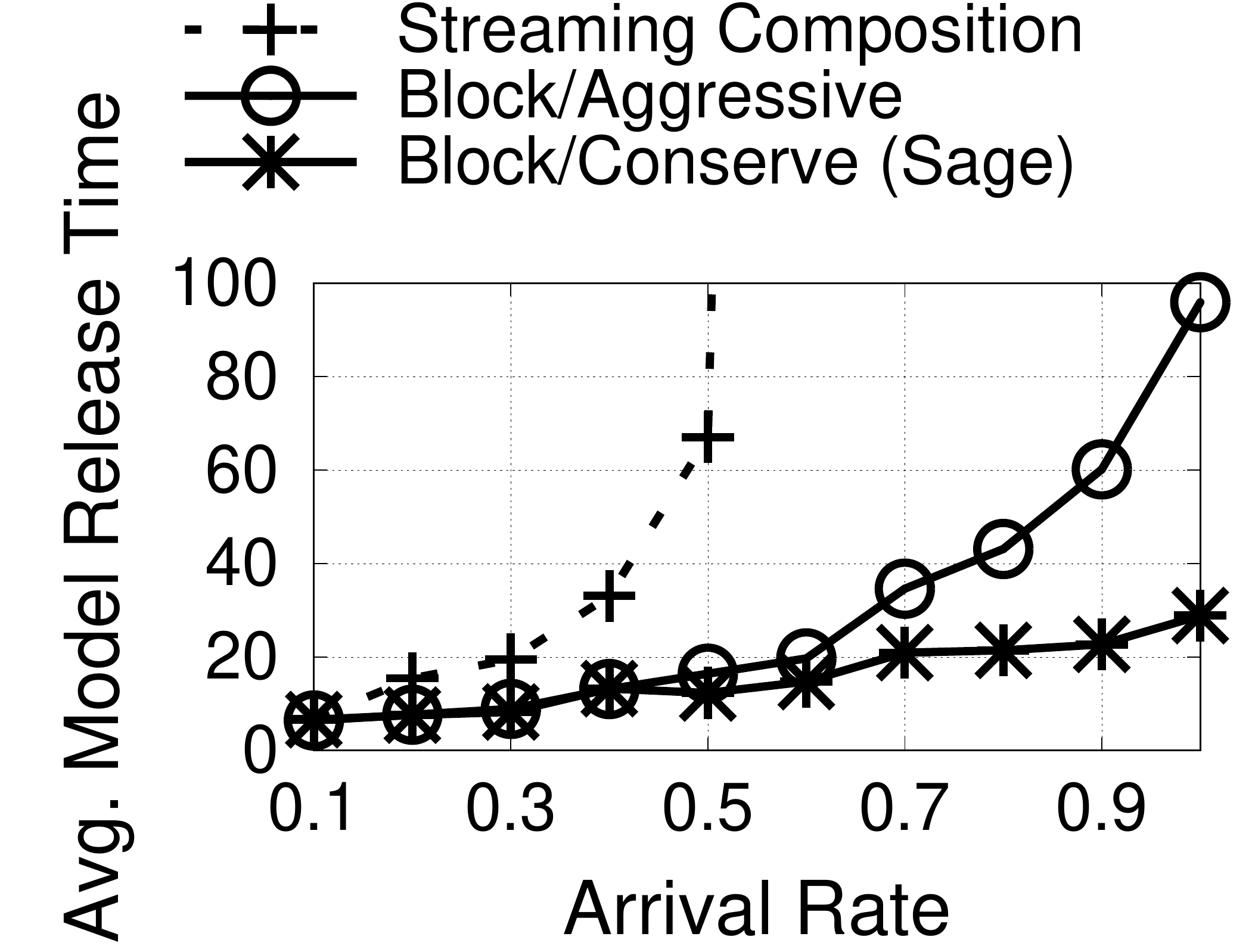}
		\caption{{\bf Criteo Dataset}}
		\label{fig:evaluation:workload-criteo}
	\end{subfigure}

	\vspace{-0.4cm}
  \caption{\bf Average Model Release Time Under Load.}
  \label{fig:evaluation:workload-evaluation}
	\vspace{-0.3cm}
\end{figure}

\F~\ref{fig:evaluation:workload-evaluation} shows each strategy's average model release time under increasing load (higher model arrival rate), as the system enforces $\egdg=(1.0,10^{-6})$-DP over the entire stream.
We make two observations.
First, \sysname's block composition is crucial.
Query Composition and Streaming Composition quickly degrade to off-the-charts release times: supporting more than one model every two hours is not possible and yields release times above three days.
On the other hand, strategies leveraging \sysname's block composition both provide lower release times, and can support up to 0.7 model arrivals per hour (more than 15 new models per day) and release them within a day.
Second, we observe consistently lower release times under the privacy budget conserving strategy.
At higher rates, such as 0.7 new models per hour, the difference starts to grow: Block/Conserve has a release time 4x and 2x smaller than Block/Aggressive on Taxi (\F~\ref{fig:evaluation:workload-taxi}) and Criteo (\F~\ref{fig:evaluation:workload-criteo}) respectively.
Privacy budget conservation reduces the amount of budget consumed by an individual pipeline, thus allowing new pipelines to use the remaining budget when they arrive.

\section{Related Work}
\label{sec:related-work}

\sysname's main contribution -- block composition -- is related to {\em DP composition theory}.
Basic~\cite{Dwork:2006:CNS:2180286.2180305} and strong~\cite{dwork2010boosting,kairouz2015composition} composition theorems give the DP guarantee for multiple queries with adaptively chosen computation.
McSherry~\cite{Mcsherry:pinq} and Zhang, et.al.~\cite{Zhang:2018:EFD:3183713.3196921} show that non-adaptive queries over non-overlapping subsets of data can share the DP budget.
Rogers, et.al.~\cite{rogers2016privacy} analyze composition under adaptive DP parameters, which is crucial to our block composition.
These works all consider fixed datasets and query-level accounting.

Compared to all these works, our main contribution is to formalize the new block-level DP interaction model, which supports ML workloads on growing databases while enforcing a global DP semantic without running out of budget.
This model sits between traditional DP interaction with static data, and streaming DP working only on current data.
In proving our interaction model DP we leverage prior theoretical results and analysis methods.
However, the most comprehensive prior interaction model~\cite{rogers2016privacy} did not support all our requirements, such as interactions with adaptively chosen data subsets, or future data being impacted by previous queries.

{\em Streaming DP}~\cite{Chan:2011:PCR:2043621.2043626,Dwork:2010:DPN:1873601.1873617,pan-private-streaming-algorithms,dwork2010differential} extends DP to data streams but is restrictive for ML.
Data is consumed once and assumed to never be used again.
This enables stronger guarantees, as data need not even be kept internally.
However, training ML models often requires multiple passes over the data.

Cummings, et.al.~\cite{cummings2018differential} consider {\em DP over growing databases}.
They focus on theoretical analysis and study two setups.
In the fist setup, they also run DP workloads on exponentially growing data sizes.
However, their approach only supports linear queries, with a runtime exponential in the data dimension and hence impractical.
In the second setup, they focus on training a single convex ML model and show that it can use new data to keep improving.
Supporting ML workloads would require splitting the privacy budget for the whole stream among models, creating a running out of privacy budget challenge.

A few {\em DP systems} exist, but none focuses on streams or ML.
PINQ~\cite{Mcsherry:pinq} and its generalization wPINQ~\cite{proserpio2014calibrating} give a SQL-like interface to perform DP queries. 
They introduce the partition operator allowing {\em parallel composition}, which resembles \sysname's block composition.
However, this operator only supports non-adaptive parallel computations on non-overlapping partitions, which is insufficient for ML.
Airavat~\cite{roy2010airavat} provides a MapReduce interface and supports a strong threat model against actively malicious developers.
They adopt a perspective similar to ours, integrating DP with access control.
GUPT~\cite{mohan2012gupt} supports automatic privacy budget allocation and lets programmers specify accuracy targets for arbitrary DP programs with a real-valued output; it is hence applicable to computing summary statistics but not to training ML models.
All these works focus on static datasets and adopt a generic, query-level accounting approach that applies to any workload.
Query-level accounting would force them to run out of privacy budget if unused data were available.
Block-level accounting avoids this limitation but applies to workloads with specific data interaction characteristics (\S\ref{sec:sage-access-control}).

\section{Summary and Future Work}
\label{sec:conclusions}
As companies disseminate ML models trained over sensitive data to untrusted domains, it is crucial to start controlling data leakage through these models.
We presented {\em \sysname}, the first ML platform that enforces a global DP guarantee across all models released from sensitive data streams.
Its main contributions are its {\em block-level accounting} that permits endless operation on streams and its {\em \iterativetraining} that lets developers control DP model quality.
The key enabler of both techniques is our systems focus on ML training workloads rather than DP ML's typical focus on individual training algorithms.
While individual algorithms see either a static dataset or an online training regime, workloads interact with {\em growing databases}.  Across executions of multiple algorithms, new data becomes available (helping to renew privacy budgets and allow endless operation) and old data is reused (allowing training of models on increasingly large datasets to lessen the effect of DP noise on model quality).

We believe that this systems perspective on DP ML presents other opportunities worth pursuing in the future.
One of them is to allocate the multiple resources in a DP ML system: data, privacy budgets, and compute resources.
\sysname proposes a specific heuristic for allocating the first two resources (\S\ref{sec:sage-iterative-training}) but leaves unexplored tradeoffs between data and compute resources.
To conserve budgets, we use as much data as is available in the database when a model is invoked, with the lowest privacy budget.  This gives us the best utilization of the privacy resource.  But training on more data consumes more compute resources.
Identifying principled approaches to perform these allocations is an open problem that systems researchers are uniquely positioned to address given the rich resource allocation literature developed by this community.

A key limitation of this work is the focus on event-level privacy, a semantic that is insufficient when groups of correlated observations can reveal sensitive information.
The best known example of such correlation happens when a user contributes multiple observations, but other examples include repeated measurements of a phenomenon over time, or users and their friends on a social network.
In such cases, observations are all correlated and can reveal sensitive information, such as a user's demographic attributes, despite event-level DP.
To increase protection, an exciting area of future work is to add support for and evaluate user-level privacy.
Our block accounting theory is amenable to this semantic (\S\ref{sec:applications-to-user-level-privacy}), but finding settings where the semantic can be sustained without running out of budget is an open challenge.

\section{Acknowledgements}
We thank our shepherd, Thomas Ristenpart, and the anonymous reviewers for the valuable comments.
This work was funded through NSF CNS-1351089, CNS-1514437, and CCF-1740833, two Sloan Faculty Fellowships, a Microsoft Faculty Fellowship, a Google Ph.D. Fellowship, and funds from Schmidt Futures and Columbia Data Science Institute.

\renewcommand{\baselinestretch}{1}
{
  \bibliographystyle{abbrv}
  \bibliography{bib/abbrev,bib/conferences,bib/refs}
}

\appendix
\section{Block Composition}
\label{appendix:block-composition}

This section makes several clarifications and precisions to the block composition theory in \S\ref{sec:block-composition}.
It then ports prior strong composition results to our block accounting model, both for fixed and adaptive choices of blocks and DP parameters.

\subsection{Clarifications and Precisions}

\heading{Neighboring Datasets.}
We measure the distance between datasets using the symmetric difference: viewing
a dataset as a multiset, two datasets  $D$ and $D'$ are neighboring if their
disjunctive union (the elements which are in one of the sets but not in their
intersection) is at most one. We note $|D \oplus D'|$ the distance between $D$
and $D'$.
This definition is not the most standard: most DP work uses the Hamming distance
which counts the number of records to change to go from $D$ to $D'$.
Intuitively, under the symmetric difference an attacker can add or remove a
record in the dataset. Changing a record corresponds to a symmetric difference
of size $2$, but a Hamming distance of $1$.  Changing a record can still be
supported under the symmetric difference using group
privacy \cite{dwork2014algorithmic}: it is thus slightly weaker but as general.

We chose to use the symmetric difference following PINQ \cite{Mcsherry:pinq}, as
this definition is better suited to the analysis of DP composition over splits
of the dataset (our blocks). Changing one record can indeed affect two blocks
(e.g. the timestamp is changed) while adding or removing records only affects
one.

\heading{Neighboring Streams.}
This notion of neighboring dataset extends pretty directly to streams of data~\cite{Chan:2011:PCR:2043621.2043626,Dwork:2010:DPN:1873601.1873617,pan-private-streaming-algorithms,dwork2010differential}.
Two streams $D$ and $D'$ indexed by $t$ are neighboring if there exists an index $T$ such that: for $t < T$ the streams are identical (i.e. $|D_{t<T} \oplus D'_{t<T}| = 0$), and for all $t \geq T$ the streams up to $t$ form neighboring datasets (i.e. $|D_{t \geq T} \oplus D'_{t \geq T}| \leq 1$).
This is equivalent to our Algorithm (\ref{fig:block-compose}) where the data, though unknown, is fixed in advance with only one record being changed between the two streams.

This definition however is restrictive, because a change in a stream's data will typically affect future data. This is especially true in an ML context, where a record changed in the stream will change the targeting or recommendation algorithms that are trained on this data, which in turn will impact the data collected in the future.
Because of this, $D$ and $D'$ will probably differ in a large number of observations following the adversary's change of one record.
Interactions described in Algorithm (\ref{fig:sage-block-compose}) model these dependencies.
We show in Theorem (\ref{theorem:adaptive-block-basic-compose}) that if the data change impacts future data only through DP results (e.g. the targeting and recommendation models are DP) and mechanisms outside of the adversary's control (our ``world'' variable $W$), composition results are not affected.

\heading{Privacy Loss Semantics.}
Recall that bounding the privacy loss (Definition \ref{def:privacy-loss}) with high probability implies DP \cite{kasiviswanathan2014semantics}: if with probability $(1-\delta)$ over draws from $v \sim V^0$ (or $v \sim V^1$) $| \Loss(v) | \leq \epsilon$, then the interaction generating $V^b$ is $(\epsilon, \delta)$-DP.

In this paper, we implicitly treated DP and bounded loss as equivalent by declaring $\M_i$s as $(\epsilon, \delta)$-DP, but proving composition results using a bound on $\M_i$'s privacy loss.
However, this is not exactly true in general, as $(\epsilon, \delta)$-DP implies a bound on privacy loss with weaker parameters, namely that with probability at least $(1-\frac{2\delta}{\epsilon e^\epsilon})$ the loss is bounded by $2\epsilon$.
In practice, this small difference in not crucial, as the typical Laplace and Gaussian DP mechanisms (and those we use in \sysname) do have the same $(\epsilon, \delta)$-DP parameters for their bounds on privacy loss~\cite{dwork2014algorithmic}: the Laplace mechanism for  $(\epsilon, 0)$-DP implies that the privacy loss is bounded by $\epsilon$ and achieving $(\epsilon, \delta)$-DP with the Gaussian mechanism implies that the privacy loss is bounded by $\epsilon$ with probability at least $(1-\delta)$.

\subsection{Proof for Basic Composition}
\label{a:basic-composition}

The paper omits the proof for basic composition to save space.
Here, we re-state the result and spell out the proof:

\begin{theorem*}[Theorem~\ref{theorem:adaptive-block-basic-compose}: Basic Sage Block Composition]
\label{appendix-theorem:adaptive-block-basic-compose}
  AdaptiveStreamBlockCompose($\A$,$b$,$r$,$\epsilon_g$,$\delta_g$,$\World$) is $(\epsilon_g,\delta_g)$-DP if for all $k$, $\text{AccessControl}^k_{\epsilon_g,\delta_g}$ enforces:
{
\begin{equation*}
  \Big( \sum_{\substack{i=1 \\ k \in \block_i}}^r \epsilon_i(v_{<i}) \Big) \le \epsilon_g \ \text{and} \ \Big( \sum_{\substack{i=1 \\ k \in \block_i}}^r \delta_i(v_{<i}) \Big) \le \delta_g .
\end{equation*}
}
\end{theorem*}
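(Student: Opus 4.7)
The plan is to reduce this theorem to two ingredients that are already available: Theorem~\ref{theorem:block-reduction} (block-level reduction of the privacy loss for the static case), and Theorem 3.3 of Rogers et al.~\cite{rogers2016privacy} (composition under adversarially and adaptively chosen privacy parameters). The technical work is to show that the streaming aspect of AdaptiveStreamBlockCompose does not break either ingredient, and then to glue them together with a union bound over blocks.

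First, I would handle the streaming dependency of blocks on past outputs. Let $l_i$ denote the highest block index in existence at the time query $i$ is issued, and let $D^b_{\le l_i}$ denote the block contents at that time. By construction in Alg.~(\ref{fig:sage-block-compose}), each new block with index $l \ne k$ is a deterministic function of the world $\World$ and of the previously released outputs $v_{<i}$; only block $k$ is chosen as a neighboring pair by the adversary. Consequently, once we condition on $v_{<i}$, the two hypotheses $b=0,1$ induce datasets that agree on every block except $k$, so for $i$ with $k \notin \block_i$ we have $\bigcup_{j \in \block_i} D_j^0 = \bigcup_{j \in \block_i} D_j^1$ and the conditional ratio $P(O^0_i = v_i\mid v_{<i})/P(O^1_i = v_i\mid v_{<i})$ equals $1$. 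This is exactly the hypothesis under which Theorem~\ref{theorem:block-reduction} applies, and so the privacy loss of the full interaction is bounded by
\[
  |\Loss(v)| \le \max_{k} \Bigl| \ln \prod_{\substack{i=1 \\ k \in \block_i}}^r \frac{P(O^0_i = v_i \mid v_{<i})}{P(O^1_i = v_i \mid v_{<i})} \Bigr|.
\]

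Second, I would bound the per-block inner quantity using adaptive composition. Fix $k$ and look only at the subsequence of queries with $k \in \block_i$; on this subsequence, both the mechanism $\M_i$ and the pair $(\epsilon_i, \delta_i)$ are chosen by the adversary as functions of $v_{<i}$, and the $\text{AccessControl}^k_{\epsilon_g, \delta_g}$ predicate ensures that $\sum_{i : k \in \block_i} \epsilon_i(v_{<i}) \le \epsilon_g$ and $\sum_{i : k \in \block_i} \delta_i(v_{<i}) \le \delta_g$ along every realized path. This is precisely the setting of Theorem~3.3 of \cite{rogers2016privacy}, which yields that with probability at least $1 - \delta_g$ over $v \sim V^0$ the per-block privacy loss is at most $\epsilon_g$. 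Combining with the reduction from the previous step and taking a union bound over the (at most $r$) blocks would give an extra factor; I would avoid this by applying Theorem~\ref{theorem:block-reduction} first to pick out the single worst block $k$ and then invoking the adaptive composition result once for that $k$, giving $(\epsilon_g, \delta_g)$-DP directly.

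The main obstacle I anticipate is the streaming-dependency argument in the first step: care is needed to argue that the random variable ``block $k$'' is well defined across the two hypotheses (the adversary commits to $k$ at line 1 of Alg.~(\ref{fig:sage-block-compose}) before interacting, which is what makes the argument go through), and to verify that the $\bot$ no-op branch does not secretly leak information (it does not, because the predicate is a function of adversarially-chosen $(\epsilon_i, \delta_i)$ values and of $v_{<i}$, both of which are identically distributed under $b=0$ and $b=1$ when $k \notin \block_i$). A secondary subtlety is the standard gap between ``$(\epsilon,\delta)$-DP'' and ``privacy loss bounded by $\epsilon$ with probability $1-\delta$'' noted in the appendix; as the authors observe, for Laplace and Gaussian mechanisms these coincide, so no loss is incurred in the intended deployment.
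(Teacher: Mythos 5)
Your proposal is correct and follows essentially the same route as the paper's proof: condition on $v_{<i}$ to absorb the stream's dependence on past outputs (since $D^b_{\le l_i}$ is determined by $v_{<i}$), apply Theorem~\ref{theorem:block-reduction} to reduce to the single differing block, and then bound the per-block loss via adaptive composition with a union bound over the queries touching that block — the paper's appendix just spells out the last step directly (triangle inequality plus per-query $(\epsilon_i(v_{<i}),\delta_i(v_{<i}))$ bounds) rather than citing Rogers et al.'s Theorem 3.3 as a black box. Your added observations about the adversary committing to $k$ up front and the $\bot$ branch not leaking are correct refinements, not deviations.
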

\begin{proof}
{Denote $l_i$ the highest block index that existed when
  query $i$ was run. Denote $D^b_{\leq l_i}$ the data blocks that existed at that time.  Recall $v_{<i}$ denotes the results from all queries released previous to $i$.
  Query $i$ depends on both $v_{<i}$ and $D^b_{\leq l_i}$; the latter is a random variable that is fully determined by $v_{<i}$.  Hence, the privacy loss for Alg.~(\ref{fig:sage-block-compose}) is:
  {\footnotesize $\Loss(v) = \ln\Big( \prod\limits_{i=1}^r \frac{P(\O^0_i = v_i | v_{<i}, D^0_{\leq
  			l_i})}{P(\O^1_i = v_i | v_{<i}, D^1_{\leq l_i})} \Big)$ =
  	$\ln\Big( \prod\limits_{i=1}^r
  	\frac{P(\O^0_i = v_i | v_{<i})}{P(\O^1_i = v_i | v_{<i})} \Big)$, 
  }  

  After applying Theorem~\ref{theorem:block-reduction}, we must show that $\forall k,$\\
  $ \big| \ln\Big( \prod\limits_{\substack{i=1 \\ k \in \block_i}}^r \frac{P(\O^0_i = v_i | v_{<i})}{P(\O^1_i = v_i | v_{<i})} \Big) \big| \leq \sum\limits_{\substack{i=1 \\ k \in \block_i}}^r \epsilon_i$ with probability $\ge (1-\sum\limits_{\substack{i=1 \\ k \in \block_i}}^r \delta_i)$.  The justification follows:
{\footnotesize \begin{align*}
  \big| \ln\Big( \prod\limits_{\substack{i=1 \\ k \in \block_i}}^r \frac{P(\O^0_i = v_i | v_{<i})}{P(\O^1_i = v_i | v_{<i})} \Big) \big| \leq \sum\limits_{\substack{i=1 \\ k \in \block_i}}^r \big| \ln\Big( \frac{P(\O^0_i = v_i | v_{<i})}{P(\O^1_i = v_i |v_{<i})} \Big) \big|
\end{align*}
}  Since $\M_i$ is $\epsilon_i(v_{<i}), \delta_i(v_{<i}))$-DP, $\big| \ln\Big( \frac{P(\O^0_i = v_i | v_{<i})}{P(\O^1_i = v_i |v_{<i})} \Big) \big| \leq \epsilon_i(v_{<i})$ with probability $\ge 1-\delta_i(v_{<i})$. Applying a union bound over all queries for which $k \in \block_i$ concludes the proof.
}  \end{proof}

\subsection{Strong Composition with Block-Level Accounting}
\label{a:strong-composition}

We now prove strong composition results for Algorithms (\ref{fig:block-compose}) and (\ref{fig:sage-block-compose}).

\heading{Fixed Blocks and DP Parameters:}
We now show how to use advanced composition results (e.g. \cite{dwork2010boosting}) is the context of block composition.
This approach requires that both the blocks used by each query and the DP parameters be fixed in advanced, and correspond to Algorithms (\ref{fig:block-compose}).

\begin{theorem}[Strong Block Composition -- Fixed DP Parameters]
  \label{corollary:block-strong-compose}
    BlockCompose($\A$, $b$, $r$, $(\epsilon_i,\delta_i)_{i=1}^r$, $\text{blocks}_{i=1}^r$) is $(\epsilon_g, \delta_g)$-DP, with:
    \begin{align*}
  \epsilon_g & = \max_k \Big( \sum_{\substack{i=1 \\ k \in blocks_i}}^r (e^{\epsilon_i} - 1)\epsilon_i + \sqrt{\sum\limits_{\substack{i=1 \\ k \in blocks_i}}^r 2\epsilon_i^2\log(\frac{1}{\tilde\delta})} \Big) , \\
  \delta_g & = \tilde\delta + \max_k \Big( \sum_{\substack{i=1 \\ k \in blocks_i}}^r \delta_i \Big)
    \end{align*}
\end{theorem}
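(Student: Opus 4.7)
The plan is to reduce strong composition for Alg.~(\ref{fig:block-compose}) to the classical strong composition theorem applied to the subsequence of queries that touch the single block where the adversary's neighboring datasets differ. First I would invoke Theorem~\ref{theorem:block-reduction} to obtain
\[
|\Loss(v)| \leq \max_{k} \Big| \ln \prod_{\substack{i=1 \\ k \in \block_i}}^r \frac{P(\O^0_i = v_i \mid v_{<i})}{P(\O^1_i = v_i \mid v_{<i})} \Big|.
\]
Since the adversary picks a single block $k^{\star}$ in which $\D^0$ and $\D^1$ differ, the entire outer max is controlled by the block-$k^{\star}$ subsequence; expressing the bound as a max over $k$ simply yields a statement that is agnostic to the unknown $k^{\star}$.

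Next, for an arbitrary block index $k$, I would analyze the subsequence $(\M_i)_{i : k \in \block_i}$ as an adaptive composition of $(\epsilon_i, \delta_i)$-DP mechanisms operating on the single block $\D_k^b$. Each such $\M_i$ may depend on the full history $v_{<i}$, including outputs of queries that do not touch block $k$; crucially, those latter outputs have identical conditional distributions under $b = 0$ and $b = 1$ (their inputs agree), so they can be folded into the adversary's internal state without altering the conditional DP guarantee for the subsequence. With this reduction in hand, the advanced composition theorem of Dwork--Rothblum--Vadhan~\cite{dwork2010boosting} applies: with probability at least $1 - \tilde\delta - \sum_{i : k \in \block_i} \delta_i$ the block-$k$ privacy loss is bounded by $\sum_{i : k \in \block_i}(e^{\epsilon_i} - 1)\epsilon_i + \sqrt{\sum_{i : k \in \block_i} 2\epsilon_i^2 \log(1/\tilde\delta)}$.

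To conclude, I would take the max over $k$ of this per-block bound, giving a uniform upper bound on $|\Loss(v)|$ equal to the stated $\epsilon_g$ that holds against any adversarial choice of $k^{\star}$. Because only a single block is actually targeted by the neighboring-dataset change, no union bound across all blocks is required in the failure probability: for any fixed $k^{\star}$ the failure probability is $\tilde\delta + \sum_{i : k^{\star} \in \block_i} \delta_i$, and the worst case over choices of $k^{\star}$ is exactly $\tilde\delta + \max_k \sum_{i : k \in \block_i} \delta_i = \delta_g$.

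The main obstacle I anticipate is formalizing the middle step, namely that conditioning on outputs of queries that do not use block $k$ does not break the conditional DP guarantee needed for advanced composition. The cleanest way is to observe that the advanced composition bound is ultimately a concentration inequality on a sum of conditional privacy-loss random variables, where the terms corresponding to queries with $k \notin \block_i$ are identically zero (their input tuples coincide under both hypotheses). The nontrivial terms form an adapted sequence whose $i$-th element is $(\epsilon_i, \delta_i)$-bounded given the history, so the standard martingale argument behind~\cite{dwork2010boosting} carries over with the summation restricted to the nontrivial subsequence.
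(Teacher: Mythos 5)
Your proposal is correct and follows essentially the same route as the paper's proof: first invoke Theorem~\ref{theorem:block-reduction} to reduce to the per-block privacy loss, then apply the standard advanced composition bound to the subsequence of queries touching block $k$ (the paper does this via conditioning events $E_i, E_i'$ of probability $1-\delta_i$ and a union bound over them, matching your martingale sketch), and finally take the max over $k$ with no union bound across blocks since only one block differs. The extra care you take in arguing that outputs of queries with $k \notin \block_i$ can be folded into the adversary's state is a reasonable elaboration of a step the paper leaves implicit, not a departure from its argument.
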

\begin{proof}
{
  After applying Theorem~\ref{theorem:block-reduction}, what remains to be shown is that
  $\forall k, \ \big| \ln\Big( \prod\limits_{\substack{i=1 \\ i \in \block_k}}^r \frac{P(\O^0_i = v_i | v_{<i})}{P(\O^1_i = v_i | v_{<i})} \Big) \big|
  \leq
\sum_{\substack{i=1 \\ k \in blocks_i}}^r (e^{\epsilon_i} - 1)\epsilon_i + \sqrt{\sum\limits_{\substack{i=1 \\ k \in blocks_i}}^r 2\epsilon_i^2\log(\frac{1}{\tilde\delta})}$
, with probability at least $(1- \tilde\delta \sum\limits_{\substack{i=1 \\ k \in \block_i}}^r \delta_i)$.
  Using the fact $\M_i$'s privacy loss is bounded by $\epsilon_i$ with probability at least $(1-\delta_i)$, we know that there exists events $E_i$ and $E'_i$ with joint probability at least $(1-\delta_i)$ such that
  for all $v_i$, $\Big| \ln\big( \frac{P(\O^0_i=v_i|E_i)}{P(\O^1_i=v_i|E'_i)} \big) \Big| \leq e^\epsilon$.
  We can now condition the analysis on $E_i$ and $E'_i$, and use Theorem 3.20 of~\cite{dwork2014algorithmic} to get that with probability at least $(1-\tilde\delta)$, $\Big| \ln\big( \frac{P(\O^0=v|E_i)}{P(\O^1=v|E'_i)} \big) \Big| \leq e^{\epsilon_k}$, where $\epsilon_k = \sum_{\substack{i=1 \\ k \in blocks_i}}^r (e^{\epsilon_i} - 1)\epsilon_i + \sqrt{\sum\limits_{\substack{i=1 \\ k \in blocks_i}}^r 2\epsilon_i^2\log(\frac{1}{\tilde\delta})}$.
  A union bound on the $E_i$ and $E'_i$ for all $\{i, k \in blocks_i \}$ completes the proof.
}  \end{proof}

The proof directly extends to the stream setting (yellow parts of Alg.~(\ref{fig:block-compose}) in the same way as in the proof of Theorem~\ref{theorem:adaptive-block-basic-compose}.

\heading{Adaptive Blocks and DP Parameters:}
Recall that with either adaptive blocks or DP parameters (or both), DP
parameters $(\epsilon_i(v_{<i}), \delta_i(v_{<i}))$ depend on history.
Traditional composition theorems do not apply to this setting. For basic
composition, the DP parameters still ``sum'' under composition. However, as
showed in \cite{rogers2016privacy}, strong composition yields a different formula:
while the privacy loss still scales as the square root of the number of queries,
the constant is worse than with parameters fixed in advance.

\begin{theorem}[Strong Adaptive Stream Block Composition]
\label{corollary:adaptive-block-strong-compose}
  AdaptiveStreamBlockCompose($\A$, $b$, $r$, $\epsilon_g$, $\delta_g$, $\W$) is $(\epsilon_g, \delta_g)$-DP, and:
{
\begin{align*}
  \hspace{-1.1cm}
  \max_k \Big( \sum_{\substack{i=1 \\ k \in \block_i}}^r \frac{(e^{\epsilon_i} - 1)\epsilon_i}{2} + \sqrt{2\big( \sum_{\substack{i=1 \\ k \in \block_i}}^r \epsilon_i^2 + \frac{\epsilon_g^2}{28.04 \log(1/\tilde\delta)} \big)} \\
  \sqrt{\big( 1 + \frac{1}{2} \log(\frac{28.04 \log(1 / \tilde\delta) \sum\limits_{\substack{i=1 \\ k \in \block_i}}^r \epsilon_i^2}{\epsilon_g^2} + 1) \log(\frac{1}{\tilde\delta}) \big)} \Big) \leq \epsilon_g, \\
  \tilde\delta + \max_k \Big( \sum_{\substack{i=1 \\ k \in \block_i}}^r \delta_i \Big) \leq \delta_g
\end{align*}
}
\end{theorem}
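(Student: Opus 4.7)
The plan is to mirror the basic-composition proof of Theorem~\ref{theorem:adaptive-block-basic-compose} but swap in an advanced (strong) composition bound for the per-block accounting, following the adaptive-parameter composition result of Rogers et al.~\cite{rogers2016privacy}. First, as in the proof of Theorem~\ref{theorem:adaptive-block-basic-compose}, I would observe that because each new block $D^b_l$ depends on previously released DP outputs and on the adversary-independent world $\World$ only, the random variable $D^b_{\le l_i}$ is measurable with respect to $v_{<i}$, so that $\Loss(v) = \ln \prod_{i=1}^r P(\O^0_i = v_i \mid v_{<i}) / P(\O^1_i = v_i \mid v_{<i})$. Invoking Theorem~\ref{theorem:block-reduction}, this reduces to controlling, for each block index $k$, the quantity $|\ln \prod_{i:\, k \in \block_i} P(\O^0_i = v_i \mid v_{<i}) / P(\O^1_i = v_i \mid v_{<i})|$.

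Next, I would fix an arbitrary block $k$ and focus on the subsequence of queries that touch it. Because the adversary chooses both $\block_i$ and $(\epsilon_i,\delta_i)$ adaptively from $v_{<i}$, the indicator $\mathbb{1}\{k \in \block_i\}$ and the privacy parameters $(\epsilon_i(v_{<i}),\delta_i(v_{<i}))$ are themselves history-dependent. This is precisely the setting of Theorem~3.3 (and the advanced-composition analog, Theorem~4.3) of Rogers et al.~\cite{rogers2016privacy}: under the access-control constraint that $\sum_{i:\,k\in\block_i} \epsilon_i(v_{<i}) \le \epsilon_g$ and $\sum_{i:\,k\in\block_i} \delta_i(v_{<i}) \le \delta_g$ is enforced at runtime, one may apply their advanced composition bound to the stream of mechanisms acting on block $k$. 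Treating all queries that do not use block $k$ as $(0,0)$-DP (their log-ratio is exactly $0$ by the vanishing argument in Theorem~\ref{theorem:block-reduction}), the effective per-block composition is over an adaptively chosen sequence of $(\epsilon_i,\delta_i)$-DP mechanisms, yielding with probability at least $1 - \tilde\delta - \sum_{i:\,k\in\block_i}\delta_i$ a per-block privacy-loss bound of the form
\[
\sum_{i:\,k\in\block_i} \tfrac{(e^{\epsilon_i}-1)\epsilon_i}{2} + \sqrt{2\bigl(\textstyle\sum_{i:\,k\in\block_i}\epsilon_i^2 + \tfrac{\epsilon_g^2}{28.04\log(1/\tilde\delta)}\bigr)\bigl(1 + \tfrac12 \log(\tfrac{28.04\log(1/\tilde\delta)\sum\epsilon_i^2}{\epsilon_g^2}+1)\bigr)\log(1/\tilde\delta)}.
\]
The hypothesis of the theorem is exactly that this quantity is upper-bounded by $\epsilon_g$ for every block $k$.

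Finally, I would union-bound over blocks: the $\max_k$ in the hypothesis, combined with Theorem~\ref{theorem:block-reduction}, gives $|\Loss(v)| \le \epsilon_g$ on a global event of probability at least $1 - (\tilde\delta + \max_k \sum_{i:\,k\in\block_i}\delta_i) \ge 1 - \delta_g$, which is the desired $(\epsilon_g,\delta_g)$-DP guarantee via the standard privacy-loss-to-DP conversion of Kasiviswanathan and Smith~\cite{kasiviswanathan2014semantics}.

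The main obstacle I anticipate is the careful bookkeeping required to plug the adaptive-parameter composition of~\cite{rogers2016privacy} into the block-level reduction: the subsequence of queries touching block $k$ is itself adaptively defined (its length and identity depend on $v_{<i}$), so one must either invoke an ``adaptive stopping'' form of their theorem or, equivalently, extend the sequence to length $r$ by inserting null $(0,0)$-DP mechanisms whenever $k \notin \block_i$ and verify that this padding preserves both the privacy-loss accounting and the adaptive filtration. A secondary subtlety is that Rogers et al.'s bound is stated for privacy-loss random variables; one must keep the distinction between $(\epsilon_i,\delta_i)$-DP and a high-probability loss bound tight, which (as noted in the appendix) holds for the Gaussian and Laplace mechanisms used in practice by \sysname.
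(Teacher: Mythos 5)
Your proposal matches the paper's proof, which is exactly: reduce to per-block privacy loss via Theorem~\ref{theorem:block-reduction} (after noting, as in Theorem~\ref{theorem:adaptive-block-basic-compose}, that block creation is measurable with respect to the released outputs), then bound each block's loss with the adaptive-parameter advanced composition bound of Rogers et al.\ (the paper invokes their Theorem~5.1 rather than 3.3/4.3, but that is the same privacy-filter result you describe). Your padding-with-$(0,0)$-DP-mechanisms remark and the union bound over blocks are the right way to make the paper's one-line argument rigorous.
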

\begin{proof}
{
  Similarly to the proof of Theorem \ref{theorem:adaptive-block-basic-compose}, we apply Theorem \ref{theorem:block-reduction}, and bound the privacy loss of any block $k$ using Theorem 5.1 of \cite{rogers2016privacy}.
}  \end{proof}

\section{Validation Tests}
\label{appendix:sla-tests}

\sysname has built-in validators for three classes of metrics. \S\ref{sec:sage-iterative-training} describes the high level approach and the specific functionality and properties of the loss-based validator.
This section details all three validators and proves their statistical and DP guarantees.

\subsection{SLAed Validator for Loss Metrics}
\label{appendix:sla-tests:loss}

Denote a loss function $l(f, x, y)$ with range $[0,B]$ measuring the quality of a prediction $f(x)$ with label $y$ (lower is better), and a target loss $\tau_{loss}$ on the data distribution $\D$.

\heading{{\ACCEPT} Test:}
Given a the DP-trained model $\fdp$, we want to release $\fdp$ only if $\cL_{\D}(\fdp) \triangleq \Exp_{(x, y) \sim \D} l(\fdp, x, y) \leq \tau_{loss}$.
The test works as follows. First, compute a DP estimate of the number of samples in the test set, corrected for the impact of DP noise to be a lower bound on the true value with probability $(1-\frac{\eta}{3})$ (Lines 11-13 \L\ref{list:dp_validator}):
\[
  \ntestDPLB = \ntest + \Lap(\frac{2}{\epsilon}) - \frac{2}{\epsilon}\ln(\frac{3}{2\eta}) .
\]
Then, compute a DP estimate of the loss corrected for DP impact (Lines 14-17 \L\ref{list:dp_validator}) to be an upper bound on the true value, $\cL_{te}(\fdp) \triangleq \frac{1}{\ntest} \sum_{te} l(\fdp, x, y)$, with probability $(1-\frac{\eta}{3})$:
\[
  \LUB^{dp}_{te}(\fdp) = \frac{1}{\ntestDPLB} \Big( \sum_{te} l(\fdp, x, y) + \Lap(\frac{2B}{\epsilon}) + \frac{2B}{\epsilon}\ln(\frac{3}{2\eta}) \Big)
\]

Lines 18-20, we see that \sysname will {\ACCEPT}~when:
\[
\LUB^{dp}_{te}(\fdp) + \sqrt{\frac{2 B \LUB^{dp}_{te}(\fdp) \ln(3/\eta)}{\ntestDPLB}} + \frac{4 B \ln(3/\eta)}{\ntestDPLB} \leq \tau_{loss} .
\]
This test gives the following guarantee:

\begin{proposition}[Loss {\ACCEPT} Test (same as Proposition~\ref{prop:loss-accept})]\label{aprop:loss-accept}
  With probability at least $(1-\eta)$, the {\ACCEPT} test returns true only if $\cL_{\D}(\fdp) \leq \tau_{loss}$.
\end{proposition}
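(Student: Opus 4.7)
The plan is to decompose the failure event into three high-confidence components and union-bound over them. I would define three ``good'' events $E_1, E_2, E_3$, each of probability at least $1-\eta/3$, and show that on their intersection (probability $\ge 1-\eta$) the \ACCEPT test returning true implies $\cL_{\D}(\fdp) \le \tau_{loss}$.

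Event $E_1$ controls the noise on the count: by the standard one-sided Laplace tail bound, a draw from $\Lap(2/\epsilon)$ is less than $-\tfrac{2}{\epsilon}\ln(\tfrac{3}{2\eta})$ with probability at most $\eta/3$, so the offset subtracted in line~14 of \L\ref{list:dp_validator} makes $\ntestDPLB$ a valid lower bound on $\ntest$ with probability $\ge 1-\eta/3$. Event $E_2$ handles the loss-sum noise analogously: with probability $\ge 1-\eta/3$ the noise of scale $2B/\epsilon$ exceeds $-\tfrac{2B}{\epsilon}\ln(\tfrac{3}{2\eta})$, so the corrected sum in line~17 upper-bounds the true loss sum $\sum_{te} l(\fdp,x,y)$; intersecting with $E_1$, dividing by the smaller denominator $\ntestDPLB$ only inflates the ratio, giving $\LUB^{dp}_{te}(\fdp) \ge \cL_{te}(\fdp)$. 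Event $E_3$ is the empirical-Bernstein concentration for the test sample: since $l(\fdp,\cdot,\cdot) \in [0,B]$ we have $\mathrm{Var}(l) \le B\,\cL_{\D}(\fdp)$, and applying Bernstein's inequality followed by the standard ``self-bounding'' step that solves the resulting quadratic in $\cL_{\D}(\fdp)$ yields, with probability $\ge 1-\eta/3$,
\[
  \cL_{\D}(\fdp) \;\le\; \cL_{te}(\fdp) + \sqrt{\tfrac{2B\,\cL_{te}(\fdp)\ln(3/\eta)}{\ntest}} + \tfrac{4B\ln(3/\eta)}{\ntest},
\]
which is exactly the quantity \texttt{bernstein\_upper\_bound} computes in \L\ref{list:dp_validator}.

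On $E_1 \cap E_2 \cap E_3$ I close the argument by observing that the right-hand side above is coordinatewise nondecreasing in its first argument and nonincreasing in its second, so substituting $\LUB^{dp}_{te}(\fdp) \ge \cL_{te}(\fdp)$ and $\ntestDPLB \le \ntest$ only weakens the upper bound on $\cL_{\D}(\fdp)$. The \ACCEPT condition is precisely the statement that this DP-surrogate upper bound is $\le \tau_{loss}$, so chaining gives $\cL_{\D}(\fdp) \le \tau_{loss}$. A union bound over $E_1^c, E_2^c, E_3^c$ delivers the overall $(1-\eta)$ guarantee; since the two Laplace draws are independent of each other and of the test-set sampling, no correlation correction is required. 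The main obstacle I anticipate is the monotonic-substitution step together with obtaining a self-bounded empirical form of Bernstein (in terms of $\cL_{te}(\fdp)$ rather than the unknown $\cL_{\D}(\fdp)$); this requires the variance bound $\mathrm{Var}(l) \le B\,\cL_{\D}(\fdp)$ and a careful inversion of a quadratic, but is a standard manipulation in learning theory. A minor additional care point is that, while the paper equates $(\epsilon,\delta)$-DP with a high-probability bound on privacy loss when invoking composition, here the statement is about statistical validity only, so DP parameters do not enter the probability accounting --- only the fact that the Laplace noise has the stated scale does.
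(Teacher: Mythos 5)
Your proposal is correct and follows essentially the same route as the paper's proof: three $\eta/3$-confidence events (Laplace correction for the count, Laplace correction for the loss sum, Bernstein concentration of the empirical loss), a monotonicity argument showing the DP-corrected surrogate dominates the clean Bernstein upper bound, and a union bound. Your write-up is somewhat more explicit than the paper's about the monotone-substitution step and the self-bounding inversion of Bernstein's inequality, but the decomposition and the key lemma are identical.
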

\begin{proof}[Proof]
  The corrections for DP noise imply that $P(\ntestDPLB > \ntest) \leq \frac{\eta}{3}$, and $P(\cL^{dp}_{te}(\fdp) > \cL_{te}(\fdp)) \leq \frac{\eta}{3}$ (i.e. the lower bounds hold with probability at least (1-$\frac{\eta}{3}$)).
  Define $\text{UB}^{dp} \triangleq \LUB^{dp}_{te}(\fdp) + \sqrt{\frac{2 B \LUB^{dp}_{te}(\fdp) \ln(3/\eta)}{\ntestDPLB}} + \frac{4 \ln(3/\eta)}{\ntestDPLB}$,
  and $\text{UB} \triangleq \cL_{te}(\fdp) + \sqrt{\frac{2 B \cL_{te}(\fdp) \ln(3/\eta)}{\ntest}} + \frac{4 \ln(3/\eta)}{\ntest}$.
  Applying Bernstein's inequality \cite{Shalev-Shwartz:2014:UML:2621980}
  yields $P(\cL_{\D}(\fdp) > \text{UB}) \leq \frac{\eta}{3}$.
  A union bound on those three inequalities gives that with probability at least
  $(1-\eta)$, $\cL_{\D}(\fdp) \leq \text{UB} \leq \text{UB}^{dp}$. The test {\ACCEPT}s
  when $\text{UB}^{dp} \leq \tau_{loss}$.  \end{proof}

This test uses Bernstein's concentration inequality to compute an upper bound for the loss over the entire distribution~\cite{Shalev-Shwartz:2014:UML:2621980}, which will give good bounds if the loss is small.
If instead one expects the variance to be small, one can use empirical Bernstein bounds~\cite{2009arXiv0907.3740M} as a drop-in replacement.
Otherwise, one can fall back to Hoeffding's inequality~\cite{hoeffding1963probability}.

\heading{{\REJECT} Test:}
The \REJECT test terminates \IterativeTraining of a model when no model of the considered class $\cF$ can hope to achieve the desired $\tau_{loss}$ performance.
Noting the best possible model on the data distribution $\fopt \triangleq \arg\min_{f\in\cF} \cL_{\D}(f)$, we want to reject when $\cL_{\D}(\fopt) > \tau_{loss}$.
To do this, we consider the best model in $\cF$ on the training set $\hat f \triangleq \arg\min_{f\in\cF} \cL_{tr}(f)$, and proceed as follows. First, we compute the DP upper and lower bounds for $\ntrain$, holding together with probability at least $(1-\frac{\eta}{3})$ (Lines 24-27 \L\ref{list:dp_validator}):
\begin{align*}
  \ntrainDP & = \ntrain + \Lap(\frac{2}{\epsilon}) , \\
  \ntrainDPLB & = \ntrainDP - \frac{2}{\epsilon}\ln(\frac{3}{\eta}) , \\
  \ntrainDPUB & = \ntrainDP + \frac{2}{\epsilon}\ln(\frac{3}{\eta}) .
\end{align*}
Then, we compute a DP estimate of the loss corrected for DP impact (Lines 14-17 \L\ref{list:dp_validator}) to be a lower bound on the true value with probability $(1-\frac{\eta}{3})$:
\[
  \LLB^{dp}_{te}(\hat f) = \frac{1}{\ntrainDPUB} \Big( \sum_{tr} l(\hat f, x, y) + \Lap(\frac{2B}{\epsilon}) - \frac{2B}{\epsilon}\ln(\frac{3}{2\eta}) \Big) .
\]
Finally, \sysname will \REJECT when:
\[
  \LLB^{dp}_{te}(\hat f) - B \sqrt{\frac{\log(3/\eta)}{\ntrainDPLB}} > \tau_{loss} .
\]

This test gives the following guarantee:
\begin{proposition}[Loss {\REJECT} Test]\label{aprop:loss-reject}
  With probability at least $(1-\eta)$, $\fdp$ (or more accuratly $\cF$) is rejected only if $\cL_{\D}(\fopt) > \tau_{loss}$.
\end{proposition}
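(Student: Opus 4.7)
The plan is to prove the contrapositive in a high-probability sense: on a $(1-\eta)$-probability event, whenever the \REJECT test fires---i.e., $\LLB^{dp}_{te}(\hat f) - B\sqrt{\log(3/\eta)/\ntrainDPLB} > \tau_{loss}$---we must have $\cL_{\D}(\fopt) > \tau_{loss}$. Mirroring the \ACCEPT-test analysis, I would decompose the $(1-\eta)$ event into three pieces of probability at least $(1-\eta/3)$ and close with a union bound. The events are: $E_1$, a one-sided Hoeffding tail applied to the \emph{fixed} hypothesis $\fopt$ on the i.i.d.\ training sample, giving $\cL_{tr}(\fopt) \leq \cL_{\D}(\fopt) + B\sqrt{\log(3/\eta)/(2\ntrain)}$; $E_2$, simultaneous two-sided control of the count noise yielding $\ntrainDPLB \leq \ntrain \leq \ntrainDPUB$ (a $\Lap(2/\epsilon)$ tail at threshold $\frac{2}{\epsilon}\ln(3/\eta)$); and $E_3$, a one-sided tail $\Lap(2B/\epsilon) \leq \frac{2B}{\epsilon}\ln(3/(2\eta))$ controlling the noise added to the training loss sum.

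Under $E_2 \cap E_3$, I would first show $\LLB^{dp}_{te}(\hat f) \leq \cL_{tr}(\hat f)$. The numerator defining $\LLB^{dp}_{te}(\hat f)$ is at most $\sum_{tr} l(\hat f,x,y)$ by $E_3$, and dividing this non-negative quantity by the inflated denominator $\ntrainDPUB \geq \ntrain$ can only shrink it further (when the noisy numerator happens to be negative the inequality is trivial since $\cL_{tr}(\hat f)\geq 0$). The ERM definition then gives $\cL_{tr}(\hat f) \leq \cL_{tr}(\fopt)$, and $E_1$ yields $\cL_{tr}(\fopt) \leq \cL_{\D}(\fopt) + B\sqrt{\log(3/\eta)/(2\ntrain)}$. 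Finally, $E_2$ also supplies $\ntrainDPLB \leq \ntrain$, so $B\sqrt{\log(3/\eta)/\ntrainDPLB} \geq B\sqrt{\log(3/\eta)/\ntrain} \geq B\sqrt{\log(3/\eta)/(2\ntrain)}$; the slack written into the test therefore dominates the Hoeffding gap. Chaining these four steps gives $\LLB^{dp}_{te}(\hat f) - B\sqrt{\log(3/\eta)/\ntrainDPLB} \leq \cL_{\D}(\fopt)$, so whenever the left side exceeds $\tau_{loss}$ so does $\cL_{\D}(\fopt)$.

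The main obstacle will be keeping careful track of the signs and orientations of the several DP corrections so that each intermediate quantity is indeed the \emph{lower} bound the argument needs: $\ntrainDPUB$ inflates the denominator (shrinking the ratio), $\ntrainDPLB$ deflates it (inflating the $1/\sqrt{\cdot}$ slack), and the $-\tfrac{2B}{\epsilon}\ln(3/(2\eta))$ shift must absorb the positive tail of the Laplace draw on the loss sum. A secondary subtlety is that Hoeffding is safe only because it is applied to the data-independent $\fopt$; the bridge from empirical to population loss runs through $\fopt$ via the ERM inequality $\cL_{tr}(\hat f) \leq \cL_{tr}(\fopt)$, so no uniform-convergence argument over $\cF$ is needed. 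As the paper already hints, this is why the guarantee is stated about $\fopt$, and why substituting $\fdp$ for the true ERM $\hat f$ in practice turns the test into a heuristic whose soundness depends on $\fdp$ approximating the training minimizer.
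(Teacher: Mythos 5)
Your proposal is correct and follows essentially the same route as the paper's proof: Hoeffding applied to the fixed hypothesis $\fopt$, the ERM inequality $\cL_{tr}(\hat f) \leq \cL_{tr}(\fopt)$ as the bridge from $\hat f$ to $\fopt$, sign-checked DP corrections making $\LLB^{dp}_{te}(\hat f)$ a lower bound on $\cL_{tr}(\hat f)$, and a union bound over three $\eta/3$ events. You actually supply slightly more detail than the paper does (the negative-numerator edge case and the explicit verification that the test's $\ntrainDPLB$ slack dominates the Hoeffding gap at $\ntrain$), and your closing remark about why the guarantee is stated for $\fopt$ and degrades to a heuristic when $\fdp$ replaces $\hat f$ matches the paper's own caveat.
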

\begin{proof}[Proof]
  By definition, $\cL_{tr}(\hat f) \leq \cL_{tr}(\fopt)$.
  Applying Hoeffding's inequality~\cite{hoeffding1963probability} to $\fopt$ gives $P(\cL_{\D}(\fopt) < \cL_{tr}(\fopt) -  B \sqrt{\frac{\log(3/\eta)}{\ntrain}}) \leq \frac{\eta}{3}$.
  Similarly to the proof for Proposition (\ref{aprop:loss-accept}), applying a union bounds over this inequality and the DP correction gives that with probability at least $(1-\eta)$, $\cL_{\D}(\fopt) \geq \cL_{tr}(\fopt) - B \sqrt{\frac{\log(3/\eta)}{\ntrain}} \geq \cL^{dp}_{tr}(\hat f) - B \sqrt{\frac{\log(3/\eta)}{\ntrainDPLB}} > \tau_{loss}$, concluding the proof.
\end{proof}

Here we leverage Hoeffding's inequality~\cite{hoeffding1963probability} as we need to bound $\cL_{\D}(\fopt)$: since we do not know $\fopt$, we cannot compute its variance or loss on the training set to use (empirical) Bernstein bounds.

\heading{DP Guarantee:}
We now need to prove that the {\ACCEPT} and {\REJECT} tests each are $(\epsilon, 0)$-DP. Since they each use a disjoint split of the dataset (training and testing) running both tests is $(\epsilon, 0)$-DP over the entire dataset.

\begin{proposition}[DP \ACCEPT]\label{aprop:accept-dp}
  {\ACCEPT} is $(\epsilon, 0)$-DP.
\end{proposition}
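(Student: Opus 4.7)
The plan is to decompose the \ACCEPT test into its two data-dependent primitives and argue DP for each via the Laplace mechanism, then invoke composition and post-processing. Inspecting lines 11--21 of \L\ref{list:dp_validator} (and the corresponding formulas for $\ntestDPLB$ and $\LUB^{dp}_{te}(\fdp)$), the only places where the test touches the sensitive test set are: (i) the noisy test-set size $\ntest + \Lap(2/\epsilon)$, and (ii) the noisy clipped loss sum $\sum_{te} l(\fdp, x, y) + \Lap(2B/\epsilon)$, where the loss is clipped to $[0,B]$. Everything else --- subtracting the deterministic $\frac{2}{\epsilon}\ln(\frac{3}{2\eta})$ and $\frac{2B}{\epsilon}\ln(\frac{3}{2\eta})$ corrections, forming the ratio $\LUB^{dp}_{te}(\fdp)$, computing the Bernstein bound, and comparing to $\tau_{loss}$ --- is a deterministic function of these two noisy outputs (the model $\fdp$ is taken as fixed input, as it was trained on the disjoint training split).

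First I would bound the sensitivities on neighboring test sets (symmetric difference at most one). The count has sensitivity $1$, so adding $\Lap(2/\epsilon)$ yields $(\epsilon/2, 0)$-DP by the standard Laplace mechanism. The clipped loss sum has sensitivity at most $B$, since a single added or removed record can change the sum by at most $B$; adding $\Lap(2B/\epsilon)$ therefore yields $(\epsilon/2, 0)$-DP.

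Next I would apply basic composition (e.g., the standard sum-of-$\epsilon_i$ rule for pure DP mechanisms applied to the same dataset): the joint release of the noisy count and the noisy loss sum is $(\epsilon/2 + \epsilon/2, 0) = (\epsilon, 0)$-DP. Finally, the entire remainder of \texttt{\_ACCEPT\_test} --- including the deterministic corrections, the division, the Bernstein upper bound, and the final threshold comparison against $\tau_{loss}$ --- is a (randomized) function of only these two DP outputs and quantities independent of the test set. By the post-processing property of differential privacy, the \ACCEPT decision inherits the $(\epsilon, 0)$-DP guarantee.

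There is essentially no main obstacle here; the only thing to be slightly careful about is justifying the sensitivity of the clipped loss sum (which requires the explicit clipping on line 16 of \L\ref{list:dp_validator}) and noting that $\fdp$ is treated as an input rather than as a quantity computed from the test split, so that the loss evaluations $l(\fdp, x, y)$ depend on the test set only through the per-record contributions bounded in $[0,B]$.
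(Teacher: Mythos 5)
Your proof is correct and takes essentially the same route as the paper's: identify the two data-touching quantities (the noisy test-set count with sensitivity $1$ and the noisy clipped loss sum with sensitivity $B$), apply the Laplace mechanism to get $(\epsilon/2,0)$-DP for each, and combine via basic composition. Your additional explicit appeals to post-processing and to the clipping that justifies the sensitivity bound are details the paper leaves implicit, but they do not change the argument.
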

\begin{proof}[Proof]
  The only data interaction for {\ACCEPT} are to compute $\ntestDPLB$ and $\LUB^{dp}_{te}(\fdp)$.
  The former is sensitivity $1$, and is made $\frac{\epsilon}{2}$-DP with the Laplace mechanism.
  The latter uses data in the inner sum, which has sensitivity $B$ and is made $\frac{\epsilon}{2}$-DP with the Laplace mechanism.
  Using basic composition, the test is $(\epsilon, 0)$-DP.
\end{proof}

The proof for \REJECT is a bit more complicated as part of the procedure involves computing $\hat f$, which is not DP.
\begin{proposition}[DP \REJECT]\label{aprop:reject-dp}
  {\REJECT} is $(\epsilon, 0)$-DP.
\end{proposition}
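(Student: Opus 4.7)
The plan is to mirror the structure of the proof of Proposition~\ref{aprop:accept-dp}, treating {\REJECT} as a composition of two DP computations on the training set and invoking basic composition. The two data-dependent quantities released by {\REJECT} are $\ntrainDP$ (used for both $\ntrainDPLB$ and $\ntrainDPUB$) and the noisy loss sum $\sum_{tr} l(\hat f, x, y) + \Lap(2B/\epsilon)$. The final test statistic is a deterministic post-processing of these two noisy releases, together with the public quantities $\epsilon$, $\eta$, $B$, and $\tau_{loss}$, so by the post-processing property of DP it suffices to show that each release is $(\epsilon/2, 0)$-DP and then apply basic composition.

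The count release is straightforward: $\ntrain$ has sensitivity $1$ under the symmetric-difference neighboring relation (adding or removing one record changes the count by one), so adding $\Lap(2/\epsilon)$ yields an $(\epsilon/2, 0)$-DP estimate by the standard Laplace mechanism. The subtraction of the deterministic correction $\frac{2}{\epsilon}\ln(3/\eta)$ to form $\ntrainDPLB$ and the symmetric addition for $\ntrainDPUB$ are post-processing and therefore free.

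The main obstacle, and the reason {\REJECT} is more delicate than {\ACCEPT}, is that $\hat f \triangleq \arg\min_{f\in\cF} \cL_{tr}(f)$ is itself a non-DP function of the training data, so we cannot treat the inner sum as a query on a fixed $f$ and bound its sensitivity by $B$ in the naive way. The key observation is that we never release $\hat f$; we only release the scalar $\sum_{tr} l(\hat f, x, y) = \ntrain \cdot \min_{f\in\cF} \cL_{tr}(f)$. Writing $g(\text{tr}) = \min_{f\in\cF} \sum_{(x,y)\in\text{tr}} l(f, x, y)$, the classical minimum-value sensitivity argument applies: for neighboring training sets $\text{tr}, \text{tr}'$ with $|\text{tr} \oplus \text{tr}'|\le 1$, adding or removing one record can change $g$ by at most $B$, since $l\in[0,B]$ and the outer minimum can only change by the contribution of the differing record evaluated at either the old or the new minimizer (a standard argument via $g(\text{tr}) \le \sum_{\text{tr}} l(\hat f', \cdot) \le g(\text{tr}') + B$ and symmetrically). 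Thus $g$ has sensitivity $B$, and adding $\Lap(2B/\epsilon)$ yields an $(\epsilon/2, 0)$-DP release.

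Combining the two releases by basic composition gives $(\epsilon, 0)$-DP, and dividing by $\ntrainDPUB$ and subtracting the deterministic Hoeffding-style correction $B\sqrt{\log(3/\eta)/\ntrainDPLB}$ are both post-processing of the two noisy quantities and of public parameters, preserving the guarantee. I expect the sensitivity bound on $g$ to be the only nontrivial step; once it is in hand, the rest is bookkeeping that exactly parallels the {\ACCEPT} proof.
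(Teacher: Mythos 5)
Your proof is correct and follows essentially the same route as the paper's: both reduce the problem to showing that the minimum training-loss sum $\min_{f\in\cF}\sum_{tr} l(f,x,y)$ has sensitivity $B$ (via the standard two-sided comparison of the minimizers on neighboring sets), then combine the $(\epsilon/2,0)$-DP count and $(\epsilon/2,0)$-DP loss-sum releases by basic composition and treat everything else as post-processing. Your write-up is, if anything, slightly more explicit about the decomposition and post-processing steps than the paper's.
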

\begin{proof}[Proof]
  To apply the same argument as for Prop.\ref{aprop:accept-dp},
  we need to show that computing $\cL_{tr}(\hat f)$, which includes computing $\hat f$, has sensitivity $B$.
  Recall that $\hat f$ minimizes the training loss:
  $\hat f_{tr} = \arg\min_{f\in\cF} \sum_{\ntrain} l(f, x, y)$.
  If we add a data point $d$, then $\hat f_{tr}$ has a loss at worst $B$ on the new point.
  Because the best model with the new data point $\hat f_{tr+d}$ is at least as good as $\hat f_{tr}$ (otherwise it wouldn't be the $\arg\min$ model), $\cL_{tr + d}(\hat f_{tr+d}) \leq \cL_{tr}(\hat f_{tr}) + B$.
  Similarly, $\hat f_{tr+d}$ cannot be better than $\hat f_{tr}$ on the training set, so $\cL_{tr + d}(\hat f_{tr+d}) \geq \cL_{tr}(\hat f_{tr})$.
  Hence the sensitivity of computing $\cL_{tr}(\hat f)$ is at most $B$.
\end{proof}

We highlight that \REJECT relies on computing $\hat f$ for both its statistical and DP guarantees. However, $\hat f$ may not always be available. In particular, it can be computed for convex problems, but not for NNs for instance.

\subsection{SLAed Validator for Accuracy}
\label{appendix:sla-tests:accuracy}

The accuracy metric applies to classification problems, where a model predicts one out of many classes.  The target $\tau_{acc}$ is the proportion of correct predictions to reach on the data distribution.
The accuracy validator follows the same logic as the loss validator, with two small changes. First, the upper and lower bounds are reversed as losses are minimized while accuracy is maximized. Second, the result of classification predictions follows a binomial distribution, which yields tighter confidence intervals.
Note $\mathbb{1}\{\text{predicate}\}$ the indicator function with value $1$ if the predicate is true, and $0$ otherwise, and $\BinUB(k, n, \eta)$ and $\BinLB(k, n, \eta)$ the upper and lower bounds on the probability parameter $p$ of a binomial distribution such that $k$ happens with probability at least $\eta$ out of $n$ independent draws (both can be conservatively approximated using a Clopper-Pearson interval).

We compute $k^{dp}_{te} = \sum_{\ntest} \mathbb{1}\{f(x) = y\} + Laplace(\frac{2}{\epsilon})$ and $\ntestDP = \ntest + Laplace(\frac{2}{\epsilon})$. Similarly for the training set $tr$, $k^{dp}_{tr} = \sum_{\ntest} \mathbb{1}\{\hat f(x) = y\} + Laplace(\frac{2}{\epsilon})$ and $\ntrainDP = \ntrain + Laplace(\frac{2}{\epsilon})$. \sysname will {\ACCEPT} when:
\[
  \BinLB\Big(k^{dp}_{te} - \frac{2}{\epsilon}\ln(\frac{3}{\eta}), \ntestDP + \frac{2}{\epsilon}\ln(\frac{3}{\eta}), \frac{\eta}{3}\Big) \geq \tau_{acc} .
\]
And {\REJECT} when:
\[
  \BinUB\Big(k^{dp}_{tr} + \frac{2}{\epsilon}\ln(\frac{3}{\eta}), \ntrainDP - \frac{2}{\epsilon}\ln(\frac{3}{\eta}), \frac{\eta}{3}\Big) < \tau_{acc} .
\]

Using the same reasoning as before, we can prove the equivalent four Propositions (\ref{aprop:loss-accept}, \ref{aprop:loss-reject}, \ref{aprop:accept-dp}, \ref{aprop:reject-dp}) for the accuracy validator.
However, finding $\hat f$ for accuracy is computationally hard.

\subsection{SLAed Validator for Sum-based Statistics}
\label{appendix:sum-validation}
This validator applies to computing sum based statistics (e.g. mean, variance).
The target is defined as the maximum size of the absolute (additive) error
$\tau_{err}$ for these summary statistics on the data distribution. This error
can be computed directly on the training set, so there is no need for a test set.
Second, because of the law of large numbers, we can always reach the target
precision, so there is no rejection test.
We next show the test using Hoeffding's inequality
\cite{hoeffding1963probability} (if we expect the variance to be small,
empirical Bernstein bounds \cite{2009arXiv0907.3740M} will be better and are
directly applicable).

Compute:
\vspace{-0.3cm}
\[
  \ntrainDP = \ntrain + Laplace(\frac{2}{\epsilon}) - \frac{2}{\epsilon}\ln(\frac{2}{\eta})
\]
\vspace{-0.3cm}

\sysname {\ACCEPT}s if:
\vspace{-0.3cm}
\[
  \frac{1}{\ntrainDP}\frac{2}{\epsilon}\ln(\frac{2}{\eta})
   + B \sqrt{\frac{\ln(2/\eta)}{\ntrainDP}} \leq \tau_{err} ,
\]
in which case the absolute error is bellow $\tau_{err}$ with probability at
least $(1-\eta)$, accounting for the statistical error, as well as the impact of
DP noise on both the sum based summary statistic and the {\ACCEPT} test.
Once again, the same reasoning allows us to prove equivalent Propositions to \ref{aprop:loss-accept} and \ref{aprop:accept-dp}.

\section{Data Cleaning}
The NYC taxi data has a lot of outliers. We filtered the dataset based on the
following criteria: Prices outside of the range $[\$0, \$1000]$, durations
outside of $[0, 2.5]$ hours, malformed date strings, Points falling outside of
the box bounded by ($40.923$, $-74.27$) in the northwest and ($40.4$, $-73.65$)
in the southeast which encompasses all of New York City and much of the
surrounding metropolitan area.  Filtering points is acceptable within
differential privacy but requires that we also account for privacy loss on the
filtered points -- which we do.
However, \sysname does not address the data exploration phase which would be
required for more complex data filtering.
Exploration is outside of \sysname's scope, and may be better addressed by
systems such as PINQ~\cite{Mcsherry:pinq}.

\end{document}